\algrenewcommand\algorithmicindent{1.0em}
\newcommand{\GLC}{\ensuremath{\mathrm{GLC}}\xspace}
\newcommand{\PRM}{\ensuremath{\mathrm{PRM}}\xspace}
\newcommand{\PRMs}{\ensuremath{\mathrm{PRM}^*}\xspace}
\newcommand{\RRT}{\ensuremath{\mathrm{RRT}}\xspace}
\newcommand{\RRTs}{\ensuremath{\mathrm{RRT}^*}\xspace}
\newcommand{\SST}{\ensuremath{\mathrm{SST}}\xspace}
\newcommand{\EST}{\ensuremath{\mathrm{EST}}\xspace}
\newcommand{\NULL}{\ensuremath{\mathtt{NULL}}\xspace}
\newcommand{\U}{\ensuremath{\mathcal{U}}\xspace}
\newcommand{\Ugoal}{\ensuremath{\mathcal{U}_{\mathrm{goal}}}\xspace}
\newcommand{\Xxo}{\ensuremath{\mathcal{X}_{x_{0}}}\xspace}
\begin{document}
\title{A Generalized Label Correcting Method for \\ Optimal Kinodynamic Motion Planning }

\author{Brian Paden and Emilio Frazzoli}
\institute{Massachusetts Institute of Technology \\ 
\email{bapaden@mit.edu}, $\quad$ \email{frazzoli@mit.edu}}
\maketitle
\begin{abstract}
An optimal kinodynamic motion planning algorithm is presented and  described as a generalized label correcting (\GLC) method.
%
%
Advantages of the technique include a simple implementation, convergence to the optimal cost with increasing resolution, and no requirement for a point-to-point local planning subroutine.
The principal contribution of this paper is the construction and analysis of the \GLC conditions which are the basis of the proposed algorithm.
%
%
%

\end{abstract}

\section{Introduction}
Motion planning is a challenging and fundamental problem in robotics research with numerous algorithms fine-tuned to variations of the problem.
Among the most popular is the \PRM~\cite{PRM}, an algorithm for planning in relatively high dimensions, but requiring a point-to-point local planning subroutine or steering function to connect pairs of randomly sampled states.
A variation, \PRMs~\cite{karaman2011sampling}, converges to an optimal solution with respect to some objective provided the steering function is optimal.
%

%
The \EST~\cite{EST_Journal} and \RRT~\cite{RRT_Journal} algorithms were developed to address suboptimal planning with differential constraints where a steering function is unavailable.
The steering function in these algorithms is replaced by a subroutine which forward integrates the system dynamics with a selected control input.
Optimal planning under differential constraints can be addressed by the kinodynamic variant of the \RRTs algorithm~\cite{karaman2010optimal}, but again requires an optimal steering function like \PRMs. 
The effectiveness of \RRTs has motivated several general approaches for optimal steering discussed in~\cite{perez2012lqr,xie2015toward}.
However, the computation of these solutions can slow down the iteration time of \RRTs considerably.
%

%
%
%

%
This paper addresses optimal planning under differential constraints without the use of a steering function. 
The method is based on a discrete approximation of the problem as a graph together with a generalization of label correcting techniques such as Dijkstra's algorithm~\cite{dijkstra1959note}.
Label correcting algorithms compare the relative cost of paths in a graph terminating at the same vertex and discard paths with non-minimal cost.
This is effective when there are multiple paths reaching each vertex from the root vertex.
Without a steering function it is difficult to construct such a graph approximating motion planning solutions since multiple trajectories terminating at specified states must be generated.
The intuitive generalization is to compare the cost of paths related to trajectories terminating "close enough" to one another.
This concept first appeared in~\cite{dolgov2010path} as the Hybrid $\rm A^*$ algorithm, but without conditions for resolution completeness.
More recently, the \SST algorithm~\cite{Li2016Asymptotically-} provided a more principled approximation of the problem and an algorithm converging asymptotically to an approximately optimal solution. 
%
 
%
%
%
%
%
%

%
%
%
This paper refines this approach further with an algorithm producing approximate solutions in finite-time. 
The key difference allowing the algorithm to terminate in finite time is a slightly more conservative comparison between paths terminating in the same region of the state space. 
This comparison is described in Section \ref{sec:GLC-Methods}.
Section \ref{sec:Numerical-Examples} provides examples illustrating convergence to optimal cost solutions with increasing resolution. 
The analysis of the algorithm is addressed in Section \ref{sec:Justification} and is the principal contribution of the paper.
\section{\label{sec:GLC-Methods}GLC Methods}

\paragraph{Label correcting methods:}
Shortest path algorithms are methods of searching over paths of a graph for a minimum-cost path between an origin or root vertex to a set of goal vertices.
In a conventional label correcting method, the algorithm maintains a best known path terminating at each vertex of the graph. This path \textit{labels} that vertex. 
At a particular iteration, if a path under consideration does not have lower cost than the path labeling the terminal vertex, the path under consideration is discarded. 
As a consequence, the subtree of paths originating from the discarded path will not be evaluated.

%

\paragraph{Generalizing the notion of a label:}
Observe that the label of a vertex in conventional label correcting algorithms is in fact a label for the paths terminating at that vertex.
Then each vertex identifies an equivalence class of paths.
Paths within each equivalence class are ordered by their cost, and the efficiency of label correcting methods comes from narrowing the search to minimum cost paths in their associated equivalence class. 
The generalization is to identify paths associated to trajectories terminating in the same region of the state space instead of the same state.
However, this generalization prevents a direct comparison of cost between two related paths. 
Instead, the difference in cost must exceed a threshold  described by the \GLC conditions introduced in Section \ref{sec:glc_def}.

\subsection{Problem Formulation}

Consider a system whose configuration and relevant quantities are described by a state in $\mathbb{R}^{n}$. 
The decision variable of the problem is an input signal or continuous history of actions $u$ from a \textit{signal space} $\mathcal{U}$ affecting a state trajectory $x$ in a \textit{trajectory space} $\mathcal{X}_{x_0}$. The signal space is constructed from a set of admissible inputs $\Omega\subset\mathbb{R}^{m}$ bounded by $u_{max}$. The input signal space is defined
\begin{equation}
\mathcal{U}\coloneqq\bigcup_{\tau>0}\left\{ u\in L_{1}([0,\tau]):\: u(t)\in\Omega\:\,\forall t\in[0,\tau]\right\}. \label{eq:signal_space}
\end{equation}

The signal and trajectory are related through a model of the system dynamics described by a differential
constraint,
\begin{equation}
\frac{d}{dt}x(t)=f(x(t),u(t)), \quad x(0)=x_0\label{eq:dynamics}.
\end{equation}
A \textit{system map} $\varphi_{x_0}:\mathcal{U}\rightarrow\mathcal{X}_{x_0}$ is defined to relate signals to associated trajectories (the solution to (\ref{eq:dynamics})~\cite[cf. pg. 42]{coddington1955theory}) with domain equal to the domain of the input signal. 
The initial state $x_0$ parametrizes the map and trajectory space. 
To simplify notation, $\tau(x)$ for a function $x$ with domain $[t_1,t_2]$ denotes the maximum of the domain, $t_2$.   

In addition to the differential constraint, feasible trajectories for a particular problem must satisfy point-wise constraints defined by a subset $X_\mathrm{free}$ of $\mathbb{R}^n$ and a specified initial state $x_\mathrm{ic}$. 
%
The subset of feasible trajectories $\mathcal{X}_\mathrm{feas}$ are defined 
\begin{equation}
\mathcal{X}_\mathrm{feas}\coloneqq \left\{ x\in \mathcal{X}_{x_\mathrm{ic}}:\,\, x(t)\in X_\mathrm{free}\,\, \forall t\in [0,\tau (x)] \right\}.
\end{equation}
Similarly, the subset $X_\mathrm{goal}$ of $\mathbb{R}^n$  is used to encode a terminal constraint. The subset of $\mathcal{X}_\mathrm{feas}$ consisting of trajectories $x$ with $x(\tau(x))\in X_\mathrm{goal}$ defines $\mathcal{X}_\mathrm{goal}$.
%
%
%

%
The decision variable for the problem is the input signal $u$ which must be chosen such that the trajectory $\varphi_{x_\mathrm{ic}}(u)$ is in $\mathcal{X}_\mathrm{goal}$.
Naturally, input signals mapping to trajectories in $\mathcal{X}_\mathrm{feas/goal}$ are defined by the inverse relation  $\mathcal{U}_\mathrm{feas/goal} \coloneqq \varphi_{x_\mathrm{ic}}^{-1}(\mathcal{X}_\mathrm{feas/goal})$. 

A general cost functional which integrates a running-cost $g$ of the state and input at each instant along a trajectory is used to compare the merit of one input signal over another.
Restricted to solutions of (\ref{eq:dynamics}), the cost functional depends only on the control and intitial state, 
\begin{equation}
J_{x_{0}}(u)=\int_{[0,\tau(u)]}g([\varphi_{x_{0}}(u)](t),u(t))\, d\mu(t).\label{eq:real_cost}
\end{equation}
The notation $[\varphi_{x_{0}}(u)](t)$ denotes the evaluation of the trajectory satisfying (\ref{eq:dynamics}) with the input signal $u$ and initial state $x_{0}$ at time $t$. 

The domain of $J$ is extended with the object \NULL such that $J_{x_0}(\NULL)=\infty$ for all $x_0\in\mathbb{R}^{n}$.
Further, since (\ref{eq:real_cost}) may not admit a minimum, we address the following relaxed optimal kinodynamic motion planning problem:
\begin{problem}\label{Problem}
Find a sequence $\{u_{R}\}\subset\mathcal{U}_\mathrm{goal}\cup \NULL$ such that 
\begin{equation}
\lim_{R\rightarrow\infty}J_{x_\mathrm{ic}}(u_R)=\inf_{u\in\mathcal{U}_\mathrm{goal}}J_{x_\mathrm{ic}}(u)\coloneqq c^{*}.\label{eq:meaningful_problem}
\end{equation}
With the convention that $\inf_{u\in\emptyset}J_{x_\mathrm{ic}}(u)=\infty$, a solution sequence exists so the problem is well-posed. An algorithm parameterized by a resolution $R \in \mathbb{N}$ whose output for each $R$ forms a sequence solving this problem will be called \textit{resolution complete}.
\end{problem}

\paragraph{Assumptions.} The problem data are assumed to satisfy  the following:
\begin{enumerate}[label=\textbf{A-\arabic*},itemindent=0.25cm] 
\item The sets $X_\mathrm{free}$ and $X_\mathrm{goal}$ are open with respect to the standard topology on $\mathbb{R}^n$.
\item There are known constants $L_{f}\geq0$ and $M\geq0$ such that $\left\Vert f(x_{1},u)-f(x_{2},u)\right\Vert _{2}\leq L_{f}\left\Vert \left(x_{1}-x_{2}\right)\right\Vert _{2}$, and $\left\Vert f(x_{1},u)\right\Vert _{2}\leq M$ for all $x_{1},x_{2}\in X_\mathrm{free}$ and $u\in\Omega$.
\item  There is a known constant $L_{g}\geq0$ such that $\left\Vert g(x_{1},u_{1})-g(x_{2},u_{2})\right\Vert _{2}\leq$\\$ L_{g}\left\Vert \left(x_{1}-x_{2},u_{1}-u_{2}\right)\right\Vert _{2}$ for all $x_{1},x_{2}\in X_\mathrm{free}$ and $u_{1},u_{2}\in\Omega$.
\item  $ J_{x_{ic}}(u)>0 $ for all $u\in\mathcal{U}$.
\end{enumerate} 

\remark{
%
%

%
%
We do not require the reachable set to have a nonempty interior as in the kinodynamic variant of \RRTs~\cite{karaman2010optimal} and \SST~\cite{Li2016Asymptotically-}.
}

\subsection{Approximation of $\mathcal{U}$}

The signal space $\mathcal{U}$ is approximated by a finite subset  $\mathcal{U}_{R}$ where $R\in\mathcal{\mathbb{N}}$ is a resolution parameter. 
To construct $\mathcal{U}_{R}$ it is assumed that we have access to a family of finite subsets $\Omega_{R}\subset\Omega$ of the input space, such that the dispersion
in $\Omega$ converges to zero as $R\rightarrow \infty$. 
A family of such subsets exists and is often easily obtained with regular grids or random sampling for a given $\Omega$.   

The approximated signal space $\mathcal{U}_R$ consists of piecewise constant signals on time intervals $\left[\frac{c\cdot(i-1)}{R},\frac{c\cdot i}{R}\right)$ for $c>0$, $i\in\{1,...,d\}$, and all values $d$ less than a user specified function $h(R)$ (take the last interval to be closed). The constant values of the signal are inputs from $\Omega_R$.   

The function $h:\mathbb{N}\rightarrow\mathbb{N}$ defines a horizon limit and can be \textit{any} function satisfying
\begin{equation}
\lim_{R\rightarrow\infty}R/h(R)=0.\label{eq:h_constraint}
\end{equation}
For example, $h(R)=R\log(R)$ is acceptable. 
This ensures that the horizon limit is unbounded in $R$ so that any finite time domain can be approximated for sufficiently large $R$.
A \textit{parent} of an input signal $w\in\mathcal{U}_{R}$ with domain $\left[0,\frac{c\cdot i}{R}\right]$ is defined as the input signal $u\in\mathcal{U}_{R}$ with domain $\left[0,\frac{c\cdot(i-1)}{R}\right]$ such that $w(t)=u(t)$ for all $t\in\left[0,\frac{c\cdot(i-1)}{R}\right)$.
In this case, $w$ is a \textit{child} of $u$. Two signals are \emph{siblings} if they have the same parent.
A tree (graph) is defined with $\mathcal{U}_{R}$ as the vertex set, and edges defined by ordered pairs of signals $(u,w)$ such that $u$ is the parent of $w$.
To serve as the root of the tree, $\mathcal{U}_R$ is augmented with the special input signal $Id_{\mathcal{U}}$ defined such that $J_{x_0}(Id_{\mathcal{U}})=0$ and $[\varphi_{x_0}(Id_\mathcal{U})](0)=x_0$. $Id_{\mathcal{U}}$ has no parent, but is the parent of signals with domain $[0,c/R]$. 

The signal $w$ is an \textit{ancestor} of $u$ if $\tau(w)\leq\tau(u)$ and $w(t)=u(t)$ for all $t\in[0,\tau(w))$.
In this case $u$ is a \textit{descendant} of $w$. The \textit{depth} of a control in $\mathcal{U}_{R}$ is the number of ancestors of that control.

\subsection{Partitioning $X_\mathrm{free}$ and the GLC Conditions}
\label{sec:glc_def}
%
%
A partition of $X_\mathrm{free}$ is used to define comparable signals. We say the partition has radius $r$ if the partition elements are each contained in a neighborhood of radius $r$.
Like the discretization of $\mathcal{U}$, the radius of the partition is parametrized by the resolution $R$. 

For brevity we only consider hypercube partitions.
A user specified function $\eta:\mathbb{N}\rightarrow\mathbb{R}_{>0}$ controls the side length of the hypercube. 
For states $p_1,p_2\in\mathbb{R}^n$ we write $p_1\overset{R}{\sim}p_2$ if $\left\lfloor \eta(R)p_1\right\rfloor =\left\lfloor \eta(R)p_2\right\rfloor$, where $\left\lfloor \cdot\right\rfloor $ is the coordinate-wise floor map (e.g. $\left\lfloor (2.9,3.2) \right\rfloor = (2,3)$). 
Then the equivalence classes of the $\overset{R}\sim$ relation define a simple hypercube partition of radius $\sqrt{n}/\eta(R)$.    
We extend this relation to control inputs by comparing the terminal state of the resulting trajectory. 
For $u_1,u_2 \in \mathcal{U}_R$ we write $u_{1}\overset{\mathcal{U}_R}{\sim}u_{2}$ if the resulting trajectories terminate in the same hypercube. That is,
\begin{equation}\label{eq:equiv_signal}
u_{1}\overset{\mathcal{U}_R}{\sim}u_{2} \Leftrightarrow [\varphi_{x_\mathrm{ic}}(u_{1})](\tau(u_{1})) \overset{R}{\sim}  [\varphi_{x_\mathrm{ic}}(u_{2})](\tau(u_{2})) .
\end{equation}
Figure \ref{fig:intuition} illustrates the intuition behind this equivalence relation.
%
%
%

%
To compare signals we write $u_{1}\prec_{R}u_{2}$ if the \GLC conditions are satisfied:
\begin{enumerate}[label=\textbf{GLC-\arabic*},itemindent=0.75cm] 
\item \label{glc1} $u_{1}\overset{\mathcal{U}_R}\sim u_{2}$,
\item \label{glc2} $\tau(u_{1})\leq\tau(u_{2}),$
\item \label{glc3} $J_{x_\mathrm{ic}}(u_{1})+\frac{\sqrt{n}}{\eta(R)}\frac{L_{g}}{L_{f}}\left(e^{\frac{L_{f}h(R)}{R}}-1\right) \leq J_{x_\mathrm{ic}}(u_{2}).$
\end{enumerate}
A signal $u_1$ is called \textit{minimal} if there is no $u_2\in \mathcal{U}_R$ such that $u_{2}\prec_{R}u_{1}$. Such a signal can be thought of as being a good candidate for later expansion during the search. Otherwise, it can be discarded.
In order for the \GLC method to be a resolution complete algorithm, the scaling parameter $\eta$ must satisfy  
\begin{equation}
\lim_{R\rightarrow\infty}\frac{R}{L_f\eta(R)}\left(e^{\frac{L_{f}h(R)}{R}}-1\right)=0.\label{eq:partition_scaling}
\end{equation}
%
%
%

%
Observe that (\ref{eq:partition_scaling}) implies the cost threshold in \ref{glc3} is in $O(1/R)$ and converges to zero.
A condition yielding the same theoretical results, but asymptotically requiring more signals to be evaluated would be to replace the threshold with an arbitrarily small positive constant. 
Additionally, (\ref{eq:partition_scaling}) and \ref{glc3} simplify in some cases.
For kinematic problems $L_{f}=0$.
Taking the limit $L_{f}\rightarrow 0$, the constraint (\ref{eq:partition_scaling}) becomes $h(R)/\eta(R)\rightarrow 0$.
The second special case is minimum-time problems where $g(x,u)=1$ in (\ref{eq:real_cost}) so that $L_{g}=0$. 
Then \ref{glc3} simplifies to $J_{x_\mathrm{ic}}(u_{1}) \leq J_{x_\mathrm{ic}}(u_{2})$.

\begin{figure}[htb]
\centering{}\includegraphics[width=1\textwidth]{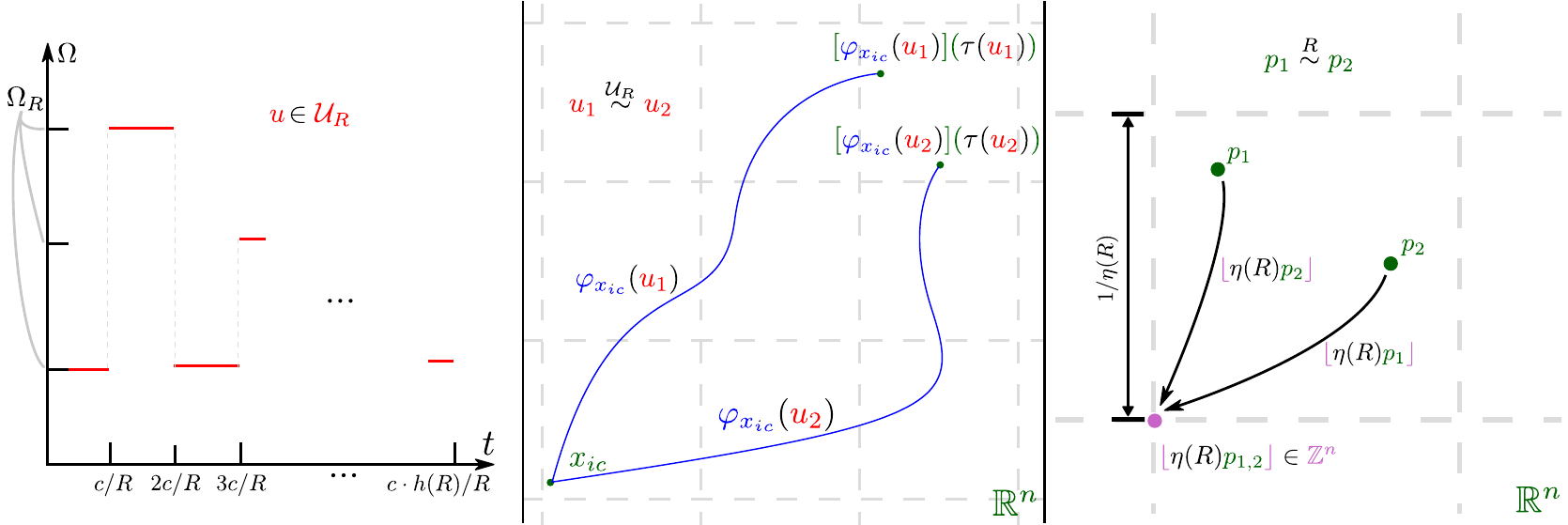}
\caption{\label{fig:intuition} Color coded depictions of: (left) a signal from $\mathcal{U}_R$ with signals represented in red, (middle) the mapping into the trajectory space $\mathcal{X}_{x_\mathrm{ic}}$ in blue for two equivalent signals $u_1 \overset{\mathcal{U}_R}{\sim}u_2$, and (right) the mapping from terminal states in $\mathbb{R}^n$ shown in green into $\mathbb{Z}^n$ by the floor map.}
\end{figure}

\subsection{Algorithm Description} 

Pseudocode for the \GLC method is described in Algorithm \ref{Alg} below. A set $Q$ serves as a priority queue of candidate signals. A set $\Sigma$ contains signals representing labels of $\overset{\mathcal{U}_R}{\sim}$ equivalence classes.

The method $\mathtt{expand}(u)$ returns the set of all children of $u$. 
The method $\mathtt{pop}(Q)$ deletes from Q, and returns an input signal $\hat{u}$ such that 
\begin{equation}
\hat{u}\in\underset{u\in Q}{{\rm argmin}}\left\{ J_{x_\mathrm{ic}}(u)\right\}. \label{eq:queue}
\end{equation} 
The addition of an admissible heuristic~\cite{hart1968formal} in (\ref{eq:queue}) can be used to guide the search without affecting the solution accuracy.  

The method $\mathtt{find}(u,\Sigma)$ returns $w\in\Sigma$ such that $u\overset{R}{\sim}w$ or \NULL if no such $w$ is present in $\Sigma$.
Problem specific collision and goal checking subroutines are used to evaluate $u\in\mathcal{U}_\mathrm{feas}$ and $u\in\mathcal{U}_\mathrm{goal}$.
The method $\mathtt{depth}(u)$ returns the number of ancestors of $u$. 
\begin{algorithm} 
\begin{algorithmic}[1]
\State $Q\leftarrow \{Id_\mathcal{U}\},\,\Sigma \gets \emptyset,\,S \gets \emptyset$  
\While {$Q\neq \emptyset$}        
\State $u \gets \mathtt{pop}(Q)$   
\If{$u \in \mathcal{U}_\mathrm{goal} $}         	  
\State \Return $(J_{x_{ic}}(u),u)$ 	
\EndIf       
\State $S \gets \mathtt{expand}(u)$   
\For{$w \in S$} 	
\State $z \gets \mathtt{find}(w,\Sigma)$ 	  
\If{$(w \notin \mathcal{U}_{feas.} \vee (z \prec_R w) \vee \mathtt{depth}(w) \geq h(R))$}  	    
\State $S \gets S\setminus \{w\}$  	  
\ElsIf{$J_{x_\mathrm{ic}}(w)<J_{x_\mathrm{ic}}(z)$} 	  
\State $\Sigma \gets (\Sigma \setminus \{z\}) \cup \{w\}$ 	  
\EndIf 
\EndFor       
\State $Q \gets Q \cup S$ 
\EndWhile  
\State 
\Return $(\infty,\NULL)$ 
\end{algorithmic} \caption{\label{Alg} Generalized Label Correcting (GLC) Method} 
\end{algorithm}
The algorithm begins by adding the root $Id_\mathcal{U}$ to the queue (line 1), and then enters a loop which recursively removes and expands the top of the queue (line 3) adding children to $S$ (line 4). 
If the queue is empty the algorithm terminates (line 2) returning \NULL (line 14). 
Each signal removed from $Q$ (line 5) is checked for membership in $\mathcal{U}_\mathrm{goal}$ in which case the algorithm terminates returning a feasible solution with approximately the optimal cost. 
Otherwise, the signals are checked for infeasibility or suboptimality by the \GLC conditions (line 9). 
Next, a relabeling condition for the associated equivalence classes (i.e. grid cells) of remaining signals is checked (line 11). 
Finally, remaining signals in $S$ are added to the queue (line 13).  
%

%
The main result of this paper, justified in Section \ref{sec:Justification}, is the following:
\begin{theorem}\label{thm:informal}
	Let $w_R$ be the signal returned by the \GLC method for resolution $R$. Then $\lim_{R\rightarrow \infty} J_{x_{ic}}(w_R)=c^*$. That is, the \GLC method is a resolution complete algorithm for the optimal kinodynamic motion planning problem.
\end{theorem}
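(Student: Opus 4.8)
The plan is to establish $\lim_{R}J_{x_{ic}}(w_R)=c^{*}$ by proving $\liminf_{R}J_{x_{ic}}(w_R)\ge c^{*}$ (easy) and $\limsup_{R}J_{x_{ic}}(w_R)\le c^{*}$ (the substance). If $c^{*}=\infty$ there is nothing to show, so assume $c^{*}<\infty$. Whenever $w_R\neq\NULL$ it lies in $\mathcal{U}_\mathrm{goal}$, so $J_{x_{ic}}(w_R)\ge\inf_{u\in\mathcal{U}_\mathrm{goal}}J_{x_{ic}}(u)=c^{*}$; this yields the first inequality once the construction below also shows $w_R\neq\NULL$ for large $R$. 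For the second inequality, fix $\varepsilon>0$ and pick $u_{\varepsilon}\in\mathcal{U}_\mathrm{goal}$ with $J_{x_{ic}}(u_{\varepsilon})<c^{*}+\varepsilon$ and finite horizon $\tau(u_{\varepsilon})$. First I would record the robustness that survives discretization: the trajectory $t\mapsto[\varphi_{x_{ic}}(u_{\varepsilon})](t)$ has compact image inside the open set $X_\mathrm{free}$ (A-1), hence lies at some positive distance $\rho$ from $\mathbb{R}^{n}\setminus X_\mathrm{free}$, and its terminal state has a ball of positive radius inside the open set $X_\mathrm{goal}$.

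Next I would construct, for each large $R$, a signal $\tilde{u}_{R}\in\mathcal{U}_{R}$ approximating $u_{\varepsilon}$. Since $R/h(R)\to0$ by (\ref{eq:h_constraint}), the maximal horizon of signals in $\mathcal{U}_{R}$ diverges, so it eventually exceeds $\tau(u_{\varepsilon})$ and the number $d=d(R)$ of steps required stays below $h(R)$. Using density of piecewise-constant signals in $L_{1}$, the convergence of the dispersion of $\Omega_{R}$ to zero, and the vanishing time step $c/R$, one obtains such a $\tilde{u}_{R}$; the standard Gr\"onwall continuous-dependence estimate for (\ref{eq:dynamics}) under A-2 then bounds $\sup_{t}\|[\varphi_{x_{ic}}(\tilde{u}_{R})](t)-[\varphi_{x_{ic}}(u_{\varepsilon})](t)\|$ by some $\delta_{R}\to0$, and A-3 gives $|J_{x_{ic}}(\tilde{u}_{R})-J_{x_{ic}}(u_{\varepsilon})|\to0$. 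Hence for all large $R$: $\tilde{u}_{R}$ is feasible (clearance $\ge\rho-\delta_{R}>0$), ends in $X_\mathrm{goal}$, has depth below $h(R)$, and satisfies $J_{x_{ic}}(\tilde{u}_{R})<c^{*}+2\varepsilon=:c_{R}$. Thus $\mathcal{U}_{R}$ contains a feasible goal signal of cost below $c^{*}+2\varepsilon$ and depth below the horizon.

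The heart of the argument is an algorithmic lemma: for all large $R$, the \GLC method terminates by returning a feasible goal signal $w_{R}$ with $J_{x_{ic}}(w_{R})\le c_{R}$. I would prove it by tracking a \emph{witness} --- a feasible goal signal $U$ of depth below $h(R)$ with $J_{x_{ic}}(U)\le c_{R}$, initialized to $\tilde{u}_{R}$ --- along the execution, together with the longest prefix $\xi$ of $U$ that has been pushed onto $Q$ (initially $\xi=Id_{\mathcal{U}}$). Whenever $\xi$ is popped without terminating the algorithm it is expanded, producing its successor $w$ along $U$; for large $R$ this $w$ is feasible and within the horizon, so it is discarded only when $z=\mathtt{find}(w,\Sigma)$ satisfies $z\prec_{R}w$. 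In that event I re-route: replace the prefix $U|_{[0,\tau(w)]}$ by $z$ while keeping the remaining controls of $U$. Since $z$ and $w$ terminate in the same hypercube and $\tau(z)\le\tau(w)$, a Gr\"onwall estimate (A-2) over the at most $h(R)/R$ remaining time bounds the separation of the two continuations by $\tfrac{\sqrt{n}}{\eta(R)}e^{L_{f}h(R)/R}$, which tends to $0$ by (\ref{eq:partition_scaling}) --- note that (\ref{eq:partition_scaling}) together with $h(R)\to\infty$ forces $\eta(R)\to\infty$ --- so for large $R$ the re-routed signal stays feasible and goal-reaching and its depth does not increase. The crucial point is that, by A-3, re-routing increases the tail cost by at most $\tfrac{\sqrt{n}}{\eta(R)}\tfrac{L_{g}}{L_{f}}(e^{L_{f}h(R)/R}-1)$, which is exactly the margin by which GLC-3 certifies $z$ cheaper than $w$; hence $J_{x_{ic}}(U)$ does not increase and stays $\le c_{R}$, while the new $\xi$ is $z$, a label and therefore a signal already pushed onto $Q$ (together with all its ancestors). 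Since the depth-$<h(R)$ part of $\mathcal{U}_{R}$ is finite and a cell's label only decreases in cost, only finitely many re-routings occur, and the tracking must end with the full current witness popped as a goal signal; in particular the algorithm does not return \NULL for large $R$. Because $\mathtt{pop}$ removes a minimum-cost element of $Q$, the goal signal $w_{R}$ actually returned costs no more than the witness prefix then residing in $Q$; combined with monotonicity of $J_{x_{ic}}$ along prefix chains (which holds in particular when the running cost $g$ is non-negative) this gives $J_{x_{ic}}(w_{R})\le c_{R}$.

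Letting $\varepsilon\downarrow0$ then yields $\limsup_{R}J_{x_{ic}}(w_{R})\le c^{*}$, and with the reverse inequality from the first paragraph the theorem follows. The main obstacle is the algorithmic lemma. Within it, the delicate points are: verifying that the re-route/iterate scheme is consistent with relabeling and with the priority-queue order (so that the \emph{first} goal signal popped is no worse than the witness) and that it terminates; checking the two asymptotic facts that (\ref{eq:partition_scaling}) is designed to furnish --- that the Gr\"onwall separation $\tfrac{\sqrt{n}}{\eta(R)}e^{L_{f}h(R)/R}$ vanishes (so feasibility and goal-membership are preserved under re-routing) while the GLC-3 margin $\tfrac{\sqrt{n}}{\eta(R)}\tfrac{L_{g}}{L_{f}}(e^{L_{f}h(R)/R}-1)$ both vanishes and exactly matches the Gr\"onwall cost bound; and handling the sign of the running cost, i.e. prefix-monotonicity of $J_{x_{ic}}$, if $g$ is not assumed non-negative (presumably where A-4 enters).
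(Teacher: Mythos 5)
Your overall architecture is the same as the paper's: approximate a near-optimal $u_\varepsilon$ by some $\tilde u_R\in\UR$ with positive clearance (this is Lemmas \ref{lem:density}--\ref{lem:approx_equal_optimal} and the sets $\Ugoal^{\varepsilon}$), and then track a goal-reaching witness through the execution, re-routing through the label $z$ whenever a prefix is pruned --- your re-routing step is precisely Lemma \ref{lem:pruning}, and your witness-tracking is the contrapositive form of the paper's chain-of-prunings contradiction in Theorem \ref{thm:main}. Your cost bookkeeping is also right: the \ref{glc3} margin exactly absorbs the tail-cost perturbation from Lemma \ref{lem:cost_sensitivity}, so $J_{x_\mathrm{ic}}(U)$ telescopes and never increases.

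The genuine gap is in the \emph{feasibility} bookkeeping. You argue that a single re-route perturbs the continuation by at most $\tfrac{\sqrt n}{\eta(R)}e^{L_f h(R)/R}$, that this quantity vanishes, and hence that ``the re-routed signal stays feasible and goal-reaching.'' But these perturbations accumulate: the witness may be re-routed up to $h(R)$ times (once per depth level), and the naive total $h(R)\cdot\tfrac{\sqrt n}{\eta(R)}e^{L_f h(R)/R}$ is \emph{not} controlled by (\ref{eq:partition_scaling}) --- it exceeds the quantity in (\ref{eq:partition_scaling}) by a factor of order $L_f h(R)/R\to\infty$, and one can choose $\eta,h$ satisfying (\ref{eq:h_constraint}) and (\ref{eq:partition_scaling}) for which it diverges. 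What saves the argument is the refined accounting the paper carries out: the $k$-th re-route occurs at depth at least $k$, so only $h(R)-k$ time steps of tail remain and its Gr\"onwall factor is $e^{L_f(h(R)-k)/R}$; summing gives $\delta_{h(R)}\leq\tfrac{R\sqrt n}{L_f\eta(R)}(e^{L_fh(R)/R}-1)=\epsilon(R)$, which is exactly the quantity (\ref{eq:partition_scaling}) forces to zero, and exactly why the initial witness must be taken in $\Ugoal^{\epsilon(R)}$. Without this your claim that the witness remains in $\Ugoal$ after all re-routes is unsupported. A secondary, more easily repaired issue: your final comparison of the returned goal signal to the witness prefix in $Q$ invokes prefix-monotonicity of $J_{x_\mathrm{ic}}$, which A-4 does not supply (it bounds $J_{x_\mathrm{ic}}$, not $J_{x_0}$ of tails from arbitrary $x_0$, and does not give $g\geq0$); the paper avoids this by phrasing the endgame as a contradiction --- a goal signal of cost at most $c_R^{\epsilon(R)}$ sits in the finite queue, so by (\ref{eq:queue}) it is popped before any signal of larger cost could be returned.
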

This conclusion is independent of the order in which children of the current node are examined in line 7. 

\section{\label{sec:Numerical-Examples}Numerical Experiments}

The \GLC method (Algorithm \ref{Alg}) was tested on five problems and compared, when applicable, to the implementation of \RRTs from~\cite{rrt_implementation} and \SST from~\cite{BBekris2015}. 
The goal is to examine the performance of the \GLC method on a wide variety of problems. 
The examples include under-actuated nonlinear systems, multiple cost objectives, and environments with/without obstacles.
Note that adding obstacles effectively speeds up the \GLC method since it reduces the size of the search tree.
Another focus of the examples is on real-time application.
In each example the running time for \GLC method to produce a (visually) acceptable trajectory is comparable to the execution time.
Of course this will vary with problem data and computing hardware.    

\paragraph{Implementation Details:}

The \GLC method was implemented in C++ and run with a 3.70GHz Intel Xeon CPU. 
The set $Q$ was implemented with an STL priority queue so that the $\mathtt{pop}(Q)$ method and insertion operations have logarithmic complexity. 
The set $\Sigma$ was implemented with an STL set which uses a binary search tree so that $\mathtt{find}(w,\Sigma)$ has logarithmic complexity as well. 

Sets $X_\mathrm{free}$ and $X_\mathrm{goal}$ are described by algebraic inequalities.
The approximation of the input space $\Omega_{R}$ is constructed by uniform deterministic sampling of $R^{m}$ controls from $\Omega$.
Recall $m$ is the dimension of the input space.

Evaluation of $u\in\mathcal{U}_\mathrm{feas}$ and $u\in\mathcal{U}_\mathrm{goal}$ is approximated by first numerically approximating $\varphi_{x_\mathrm{ic}}(u)$ with Euler integration (except for \RRTs which uniformly samples along the local planning solution). 
The number of time-steps is given by $N=\left\lceil \tau(u)/\Delta\right\rceil $ with duration $\tau(u)/N$. Maximum time-steps $\Delta$ are 0.005 for the first problem, 0.1 for the second through fourth problem, and 0.02 for the last problem. Feasibility is then approximated by collision checking at each time-step along the trajectory.

\paragraph*{Shortest path problem: }

A shortest path problem can be represented by the dynamics $f(x,u)=u$ with the running-cost $g(x,u)=1$ and input space $\Omega=\{u\in\mathbb{R}^{2}:\,\Vert u\Vert_{2}=1\}.$
The free space and goal are illustrated in Figure \ref{fig:bench}.
The parameters for the \GLC method are $c=10$, $\eta(R)=R^2/300$, and $h(R)=100R\log(R)$ with resolutions $R\in \{20,25,...,200\}$.
The exact solution to this problem is known so we can compare the relative convergence rates of the \GLC method, \SST, and \RRTs. 

\paragraph*{Torque limited pendulum swing-up: }

The system dynamics are $f(\theta,\omega,u)=(\omega,u-\sin(\theta))^{T}$
with the running-cost $g(\theta,\omega,u)=1$ and input space  $\Omega=[-0.2,0.2]$.
The free space is modeled as $X_{free}=\mathbb{R}^2$.
The initial state is $x_{ic}=(0,0)$ and the goal region is $X_{goal}=\{x\in \mathbb{R}^2:\,\Vert(\theta \pm \pi,\omega \Vert_2 \leq 0.1\}$. 
The parameters for the \GLC method are $c=6$, $\eta(R)=R^{2.5}/16$, and $h(R)=100R\log(R)$ with resolutions $R\in \{4,5,6,7,8\}$.
The optimal solution is unknown so only the running time vs. cost can be plotted. 
Without a local planning solution \RRTs is not applicable. 
The same is true for the remaining examples.

\paragraph*{Torque limited acrobot swing-up: }

The acrobot is a double link pendulum actuated at the middle joint. 
The expression for the four dimensional system dynamics are cumbersome to describe and we refer to~\cite{spong1995swing} for the details. 
The model parameters, free space, and goal region are identical to those in \cite{BBekris2015} with the exception that the radius of the goal region is reduced to 0.5 from 2.0.
The running-cost is $g(x,u)=1$ and the input space is $\Omega=[-4.0,4.0]$. 
The parameters for the \GLC method are $c=6$, $\eta(R)=R^{2}/16$, and $h(R)=100R\log(R)$ with resolutions $R\in \{5,6,...,10\}$.

\paragraph*{Acceleration limited 3D point robot: }

To emulates the mobility of an agile aerial vehicle (e.g. a quadrotor with high bandwidth attitude control), the system dynamics are $f(x,v,u)=(v,5.0u-0.1v\Vert v\Vert_{2})^{T}$ where $x$, $v$, and $u$ are each elements of $\mathbb{R}^{3}$; there are a total of six states. 
The quadratic dissipative force antiparallel to the velocity $v$ models aerodynamic drag during high speed flight.
The running-cost is $g(x,v,u)=1$ and the input space is $\Omega=\{u\in\mathbb{R}^{3}:\Vert u\Vert_{2}\leq1\}$.
The free space and goal are illustrated in Figure \ref{fig:bench}. 
The sphere in the rightmost room illustrates the goal configuration. The terminal velocity is left free. The velocity is initially zero, and the cylinder indicates the starting configuration.
The parameters for the \GLC method are $c=10$, $\eta(R)=R^{3/2}/64$, and $h(R)=100R\log(R)$ with resolutions $R\in \{8,9,...,14\}$.
A guided search is also considered with heuristic given by the distance to the goal divided by the maximum speed $v$ of the robot (A maximum speed of $\sqrt{{50}}$ can be determined from the dynamics and input constraints). 

\paragraph*{Nonholonomic wheeled robot:}

The system dynamics emulating the mobility of a wheeled robot are $f(x,y,\theta,u)=$  $(\cos(\theta),\sin(\theta),u)^{T}$. 
The running cost is $g(x,y,\theta,u)=1+2u^{2}$, with input space $\Omega=[-1,1]$.
The parameters for the \GLC method are $c=10$, $\eta(R)=15R^{5/\pi}$, and $h(R)=5R\log(R)$ with resolutions $R\in \{4,5,...,9\}$.
Note the quadratic penalty on angular rate which is relevant to rider comfort specifications in driverless vehicle applications.

\begin{figure}
	\centering{}\includegraphics[width=1\textwidth]{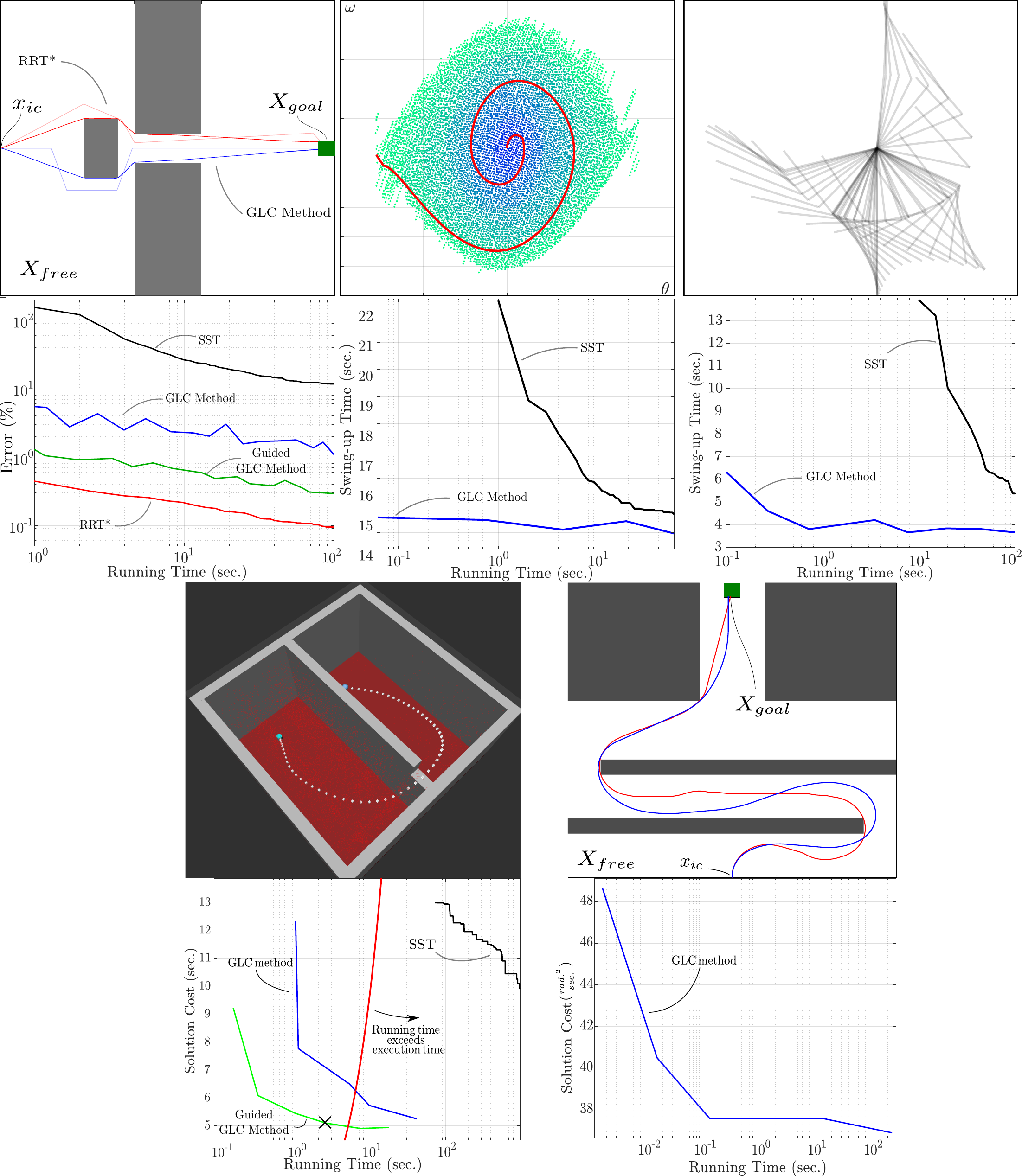}\caption{\label{fig:bench} Running times are based on 10 trial average for randomized planners and only reported if a solution was found at a particular time in all 10 trials. {\bf Shortest path example (top-left)}; the opaque and solid paths illustrate first and last solutions obtained respectively. {\bf Pendulum example (top-middle)}; the graphic illustrates the \GLC solution for $R=7$ with the colored markers indicating the cost of grid labels. {\bf Acrobot example (top-right)}; The state space has four dimensions. Only the configuration is illustrated. {\bf Acceleration limited point robot (bottom-left)}; The graphic shows the best \GLC solution with a running time less than the execution time. The $\times$ indicates the running time of the solution in the graphic. The state space has six dimensions. Only the configuration is illustrated. {\bf Wheeled robot (bottom-right)}; The blue path indicates a \GLC solution with a quadratic angular rate penalty while the red path is a \GLC solution with a shortest path objective.}
\end{figure}

\subsection{Observations and Discussion}
The \RRTs algorithm was only tested on the first example since a steering function is unavailable for the remaining four.
The \SST algorithm was tested in all examples but the last since the implementation in \cite{BBekris2015} only supports min-time objectives.
We observe the run-time vs. objective-cost curves for the \GLC method is several orders of magnitude faster than the \SST algorithm. 
In the shortest path problem, the steering function for \RRTs is a line segment between two points.
In this case \RRTs outperforms both the \GLC and \SST methods and is the more appropriate algorithm.  
A more complex steering function can increase the running time of \RRTs by a considerable constant factor making it a less competitive option. 

%
In the 3D point robot example, we see the running time for the \GLC method to produce a (visually) good quality trajectory is roughly equal to the execution time.
This suggests six states is roughly the limit for real-time application without better heuristics. 
In the wheeled robot example a minimum time cost function was compared to a cost function which also penalized lateral acceleration.
The minimum time solution results in a path with abrupt changes in angular rate making it unsuitable for autonomous driving applications where passenger comfort is a consideration. 
Penalizing angular velocity resulted in a solution with more gradual changes in angular velocity.

Each data point in Figure \ref{fig:bench} represents the running time and solution cost of a complete evaluation of the \GLC method while \RRTs and \SST are incremental methods running until  interrupted. 
Each run of Algorithm \ref{Alg} operates on $\mathcal{U}_R$ for a fixed $R$. Since $\mathcal{U}_{R}\not\subset\mathcal{U}_{R+1}$ it is possible that an optimal signal in $\mathcal{U}_R$ may be better than any signal in $\mathcal{U}_{R+1}$ for some $R$ which explains the non-monotonic convergence observed. 
\section{\label{sec:Justification}Analysis of the GLC Condition}

Section \ref{sec:topology} begins by equipping $\mathcal{U}$ and $\mathcal{X}_{x_0}$ with metrics in order to discuss continuity of $\varphi_{x_0}$ and $J_{x_0}$. 
Using these metrics we can also discuss in what sense $\mathcal{U}_R$ approximates $\mathcal{U}$ in Section \ref{sec:discretization}. 
Finally, in Lemma \ref{lem:pruning} and Theorem \ref{thm:main}, we derive a bound on the gap between the cost of the solution output by Algorithm \ref{Alg} and the optimal cost for the problem and show that the gap converges to zero as the resolution is increased.

\subsection{Metrics on $\Xxo$ and $\U$, and Continuity of Relevant
Maps \label{sec:topology}}

The metrics introduced by Yershov and Lavalle~\cite{yershov2011sufficient} will be used.  
Recall, the signal space $\mathcal{U}$ was defined in (\ref{eq:signal_space}).
A metric on this space is given by 
\begin{equation}
d_{\mathcal{U}}(u_{1},u_{2})\coloneqq\int_{[0,\min(\tau(u_{1}),\tau(u_{2})]}\Vert u_{1}(t)-u_{2}(t)\Vert_{2} \, d\mu(t)+u_{max}|\tau(u_{1})-\tau(u_{2})|.\label{eq:du}
\end{equation}
The family of trajectory spaces are now more precisely defined as 
\begin{equation}
\mathcal{X}_{x_0} \coloneqq \bigcup_{\tau>0}\left\{ x:[0,\tau]\rightarrow \mathbb{R}^n: \: x(0)=x_{0}, \, \left\Vert \frac{x(t_{1})-x(t_{2})}{|t_{1}-t_{2}|}\right\Vert _{2}\leq M\:\, \forall t_{1,2}\in[0,\tau]\right\} ,\label{eq:traj_space}
\end{equation}
where $M$ is that of assumption A-2. This set is equipped with the metric 
\begin{equation}
d_{\mathcal{X}}(x_{1},x_{2})\equiv\max_{t\in\left[0,\min\{\tau(x_{1}),\tau(x_{2})\}\right]}\left\{ \left\Vert x_{1}(t)-x_{2}(t)\right\Vert \right\} +M|\tau(x_{1})-\tau(x_{2})|.\label{eq:traj_metric}
\end{equation}

Several known continuity properties of $\varphi_{x_0}$ are reviewed here. Recall (e.g.~\cite[pg. 95]{khalil1996nonlinear}), that the distance between solutions to (\ref{eq:dynamics}) with initial conditions $x_0$ and $z_0$ is bounded by
\begin{equation}\label{lem:cont_ic}
\left\Vert [\varphi_{x_{0}}(u)](t)-[\varphi_{z_{0}}(u)](t)\right\Vert _{2}\leq\Vert x_{0}-z_{0}\Vert_{2}e^{L_{f}t}.
\end{equation}  
In addition to continuous dependence on the initial condition parameter, the map $\varphi_{x_{0}}$ is also continuous from $\mathcal{U}$ into $\mathcal{X}_{x_{0}}$ (cf.~\cite[Theorem 1]{yershov2011sufficient}). It is a  useful observation then that $\mathcal{X}_\mathrm{feas}$ is open when assumption A-1 is satisfied (cf.~\cite[Theorem 2]{yershov2011sufficient}) since it follows directly from the definition of continuity that $\mathcal{U}_\mathrm{feas}$ and $\mathcal{U}_\mathrm{goal}$ are open subsets of $\mathcal{U}$; recall that they are defined as the preimage of $\mathcal{X}_\mathrm{feas}$ and $\mathcal{X}_\mathrm{goal}$ under $\varphi_{x_\mathrm{ic}}$.

Similar observations for the cost functional are developed below.

\begin{lemma} \label{lem:cost_cont}
$J_{x_0}:\mathcal{U}\rightarrow\mathbb{R}$ is continuous for any $x_0\in \mathbb{R}^n$.
\end{lemma}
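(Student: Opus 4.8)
The plan is to show continuity of $J_{x_0}$ at an arbitrary point $u \in \mathcal{U}$ by estimating $|J_{x_0}(u) - J_{x_0}(v)|$ in terms of $d_{\mathcal{U}}(u,v)$. Fix $u$ with $\tau(u) = \tau_u$, and let $v$ be nearby, with $\tau(v) = \tau_v$; without loss of generality assume $\tau_u \le \tau_v$, write $\tau_m = \tau_u = \min(\tau_u,\tau_v)$, and set $x_u = \varphi_{x_0}(u)$, $x_v = \varphi_{x_0}(v)$. The difference of costs splits as
\begin{equation}
J_{x_0}(u) - J_{x_0}(v) = \int_{[0,\tau_m]} \bigl( g(x_u(t),u(t)) - g(x_v(t),v(t)) \bigr)\, d\mu(t) \;-\; \int_{[\tau_m,\tau_v]} g(x_v(t),v(t))\, d\mu(t). \label{eq:cost_split}
\end{equation}
The second integral is controlled because $g$ is continuous on the compact set $\overline{X_{\mathrm{free}}} \times \Omega$ (or, more directly, is bounded on $X_{\mathrm{free}} \times \Omega$ by Lipschitzness A-3 together with boundedness of $\Omega$ and local boundedness of the reachable states); call a bound $G$, so this term is at most $G\,|\tau_u - \tau_v| \le (G/u_{max})\, d_{\mathcal{U}}(u,v)$.

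For the first integral in \eqref{eq:cost_split}, I would apply the Lipschitz bound A-3 pointwise:
\begin{equation}
\| g(x_u(t),u(t)) - g(x_v(t),v(t)) \|_2 \le L_g \bigl( \| x_u(t) - x_v(t) \|_2 + \| u(t) - v(t) \|_2 \bigr). \label{eq:lip_pointwise}
\end{equation}
Integrating \eqref{eq:lip_pointwise} over $[0,\tau_m]$, the $\|u(t)-v(t)\|_2$ part integrates to at most $d_{\mathcal{U}}(u,v)$ by definition \eqref{eq:du}. For the $\|x_u(t) - x_v(t)\|_2$ part, I need a bound on the sup-distance between the two trajectories in terms of $d_{\mathcal{U}}(u,v)$; this is exactly the content of the continuity of $\varphi_{x_0}$ from $\mathcal{U}$ into $\mathcal{X}_{x_0}$ already cited from Yershov--Lavalle (\cite[Theorem 1]{yershov2011sufficient}), so $\max_{t \in [0,\tau_m]} \|x_u(t) - x_v(t)\|_2 \le d_{\mathcal{X}}(x_u,x_v) \to 0$ as $d_{\mathcal{U}}(u,v) \to 0$. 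Hence $\int_{[0,\tau_m]} \|x_u(t)-x_v(t)\|_2\, d\mu(t) \le \tau_m\, d_{\mathcal{X}}(\varphi_{x_0}(u),\varphi_{x_0}(v))$, and since $\tau_m = \tau_u$ is fixed and finite, this goes to zero.

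Combining the three pieces, $|J_{x_0}(u) - J_{x_0}(v)| \to 0$ as $d_{\mathcal{U}}(u,v) \to 0$, which establishes continuity at $u$; since $u$ was arbitrary, $J_{x_0}$ is continuous on all of $\mathcal{U}$. The main obstacle, and the step deserving the most care, is the control of $\|x_u(t) - x_v(t)\|_2$ uniformly in $t$ on $[0,\tau_m]$: this requires a Gr\"onwall-type argument tracking how a difference in the input signals propagates through the dynamics, but since the excerpt already asserts $\varphi_{x_0}: \mathcal{U} \to \mathcal{X}_{x_0}$ is continuous (and $d_{\mathcal{X}}$ dominates the sup-norm of the trajectory difference on the common domain), I can invoke that directly rather than re-deriving it. A secondary point worth stating explicitly is why $g$ is bounded on the relevant domain — one should note that all trajectories under consideration start at $x_0$ and, having speed at most $M$ by \eqref{eq:traj_space}, stay in a bounded set over any fixed finite time horizon, so together with A-3 this yields the constant $G$ above.
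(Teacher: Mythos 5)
Your proposal is correct and follows essentially the same route as the paper's proof: the same split of the cost difference into the common-domain integral plus the tail over $[\tau_u,\tau_v]$, the same pointwise application of A-3, the same appeal to continuity of $\varphi_{x_0}$ to control $\max_t\|x_u(t)-x_v(t)\|_2$ via $d_{\mathcal{X}}$, and the same bound $G|\tau_u-\tau_v|\leq (G/u_{max})\,d_{\mathcal{U}}(u,v)$ on the tail. Your closing remark justifying the existence of the bound $G$ is in fact slightly more explicit than the paper's one-line assertion.
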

\begin{proof}
Let $u_{1},u_{2}\in\mbox{\ensuremath{\mathcal{U}}}$ and without
loss of generality, let $\tau(u_{1})\leq\tau(u_{2})$. Denote
trajectories $\varphi_{x_{0}}(u_{1})$ and $\varphi_{x_{0}}(u_{2})$ by
$x_{1}$ and $x_{2}$ respectively. The associated difference in cost is 
\begin{equation}
\begin{array}{rl}
\left|J_{x_0}(u_{1})-J_{x_0}(u_{2})\right|= & \left|\int_{\left[0,\tau(u_{1})\right]}g\left(x_{1}(t),u_{1}(t)\right)\right.-g\left(x_{2}(t),u_{2}(t)\right)\, d\mu(t)\\
 & -\left.\int_{\left[\tau(u_{1}),\tau(u_{2})\right]}g\left(x_{2}(t),u_{2}(t)\right)\, d\mu(t)\right|.
\end{array}
\end{equation}
Using the Lipschitz constant of $g$ (cf. A-3), the definition of $d_{\mathcal{U} }$ in (\ref{eq:du}), and the definition of $d_{\mathcal{X} }$ in (\ref{eq:traj_metric}) the difference is bounded as follows: 
\begin{equation}
\begin{array}{rcl}
\left|J_{x_0}(u_{1})-J_{x_0}(u_{2})\right| & \leq & \left|\int_{\left[0,\tau(u_{1})\right]} L_{g}\left\Vert x_{1}(t)-x_{2}(t)\right\Vert _{2}\right.+L_{g}\left\Vert u_{1}(t)-u_{2}(t)\right\Vert _{2}\, d\mu(t) \\
 &  & -\left.\int_{\left[\tau(u_{1}),\tau(u_{2})\right]}g\left(x_{2}(t),u_{2}(t)\right)\, d\mu(t)\right|\\
 & \leq & L_{g}\tau(u_{1})\left\Vert x_{1}-x_{2}\right\Vert _{L_{\infty}\left[0,\tau(u_{1})\right]}+L_{g}\left\Vert u_{1}-u_{2}\right\Vert _{L_{1}\left[0,\tau(u_{1})\right]}\\
 &  & +\int_{\left[\tau(u_{1}),\tau(u_{2})\right]}\left|g\left(x_{2}(t),u_{2}(t)\right)\right|\, d\mu(t)\\
 & \leq & L_{g}\tau(u_{1}) d_{\mathcal{X}}(x_1,x_2)+ L_{g}d_{\mathcal{U}}(u_1,u_2)\\ 
& &  +\int_{\left[\tau(u_{1}),\tau(u_{2})\right]}\left|g\left(x_{2}(t),u_{2}(t)\right)\right|\, d\mu(t). 
\end{array}\label{eq:cost_diff}
\end{equation}
Since $x_2$ is continuous, $u_2$ is bounded, and $g$ is continuous, there exists a bound $G$ on $g(x_2(t),u_2(t))$ for $t\in [\tau(u_1),\tau(u_2)]$. Thus, the difference in cost is further bounded by  

\begin{equation}
\begin{array}{rcl}
\left|J_{x_0}(u_{1})-J_{x_0}(u_{2})\right| & \leq & L_{g}\tau(u_{1}) d_{\mathcal{X}}(x_1,x_2) + L_{g}d_{\mathcal{U}}(u_1,u_2) +G|\tau(u_2)-\tau(u_1)|. 
\end{array}
\label{eq:cost_diff2}
\end{equation}
Sine $\varphi_{x_0}$ is continuous, for $d_{\mathcal{U}}(u_1,u_2)$ sufficiently small the resulting trajectories will satisfy $L_{g}\tau(u_{1}) d_{\mathcal{X}}(x_1,x_2)<\varepsilon/3$. 
Additionally, for $u_1,u_2$ satisfying $d_{\mathcal{U}}(u_1,u_2)<\frac{\varepsilon u_{max}}{3G}$ and $d_{\mathcal{U}}(u_1,u_2)<\frac{\varepsilon}{3L_g}$ we have $G|\tau(u_2)-\tau(u_1)|<\varepsilon/3$  and $L_{g}d_{\mathcal{U}}(u_1,u_2)<\varepsilon/3$. For such a selection of $u_1,u_2$, we have $|J_{x_0}(u_1)-J_{x_0}(u_2)|<\varepsilon$. Thus, $J_{x_0}$ is continuous.
\qed
\end{proof} 
\begin{lemma}
\label{lem:cost_sensitivity}For any $u\in\mathcal{U}_\mathrm{feas}$ and
$x_{0},z_{0}\in\mathbb{R}^{n}$, 
\begin{equation}
|J_{x_{0}}(u)-J_{z_{0}}(u)|\leq\Vert x_{0}-z_{0}\Vert_{2}\cdot\frac{L_{g}}{L_{f}}\left(e^{L_{f}\tau(u)}-1\right)\label{eq:cost_sensitivity}
\end{equation}
\end{lemma}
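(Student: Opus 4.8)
The plan is to bound the integrand difference pointwise and integrate. Write $J_{x_0}(u) - J_{z_0}(u) = \int_{[0,\tau(u)]} \left( g([\varphi_{x_0}(u)](t), u(t)) - g([\varphi_{z_0}(u)](t), u(t)) \right) d\mu(t)$, where both trajectories are driven by the \emph{same} input signal $u$, so the inputs cancel inside $g$. Applying the triangle inequality to move the absolute value inside the integral and then using the Lipschitz constant $L_g$ of $g$ from assumption A-3 (with identical input arguments, so only the state arguments contribute), the integrand is bounded by $L_g \left\Vert [\varphi_{x_0}(u)](t) - [\varphi_{z_0}(u)](t) \right\Vert_2$.

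Next I would invoke the continuous-dependence-on-initial-condition estimate (\ref{lem:cont_ic}), which gives $\left\Vert [\varphi_{x_0}(u)](t) - [\varphi_{z_0}(u)](t) \right\Vert_2 \leq \Vert x_0 - z_0 \Vert_2 \, e^{L_f t}$. Substituting this bound into the integral yields $|J_{x_0}(u) - J_{z_0}(u)| \leq L_g \Vert x_0 - z_0 \Vert_2 \int_{[0,\tau(u)]} e^{L_f t} \, d\mu(t)$. Evaluating the elementary integral $\int_0^{\tau(u)} e^{L_f t}\, dt = \frac{1}{L_f}(e^{L_f \tau(u)} - 1)$ gives exactly the claimed bound (\ref{eq:cost_sensitivity}).

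There is no real obstacle here; the result is essentially a one-line consequence of two facts already established in the excerpt (the Lipschitz property A-3 and the Gronwall-type bound (\ref{lem:cont_ic})). The only minor point worth a remark is the case $L_f = 0$: then (\ref{lem:cont_ic}) degenerates to $\Vert [\varphi_{x_0}(u)](t) - [\varphi_{z_0}(u)](t)\Vert_2 \leq \Vert x_0 - z_0\Vert_2$ and the right-hand side of (\ref{eq:cost_sensitivity}) should be read in the limiting sense $\lim_{L_f \to 0}\frac{1}{L_f}(e^{L_f \tau(u)} - 1) = \tau(u)$, which is consistent with integrating the constant bound $L_g \Vert x_0 - z_0\Vert_2$ over $[0,\tau(u)]$. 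I would note this parenthetically rather than treat it as a separate case, since the statement of the lemma as written already tacitly uses the $L_f > 0$ form that appears throughout the paper (e.g. in \ref{glc3}).
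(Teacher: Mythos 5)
Your proof is correct and follows exactly the same route as the paper's: move the absolute value inside the integral, apply the Lipschitz bound on $g$ from A-3 with the input arguments cancelling, substitute the continuous-dependence estimate (\ref{lem:cont_ic}), and integrate the exponential. Your parenthetical remark on the $L_f=0$ limiting case is a small addition the paper does not make, but it does not change the argument.
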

\begin{proof}
The difference is bounded using the Lipschitz continuity of $g$. This is further bounded using (\ref{lem:cont_ic}). Denoting $x(t)=\varphi_{x_{0}}(u)$ and $z(t)=\varphi_{z_{0}}(u)$,
\begin{equation}
\begin{array}{rcl}
|J_{x_{0}}(u)-J_{z_{0}}(u)| & = & \left|\int_{[0,\tau(u)]}g(x(t),u(t)) -g(z(t),u(t))\, d\mu(t) \right|\\
 & \leq & \int_{[0,\tau(u)]}\left|g(x(t),u(t))\right. \left.-g(z(t),u(t))\right|\, d\mu(t)\\
 & \leq & \int_{[0,\tau(u)]}L_{g}\left\Vert x(t)-z(t)\right\Vert _{2}\, d\mu(t)\\
 & \leq & \int_{[0,\tau(u)]}\Vert x_{0}-z_{0}\Vert_{2} L_{g}e^{L_{f}t}\, d\mu(t)\\
 & = & \Vert x_{0}-z_{0}\Vert_{2}\frac{L_{g}}{Lf}\left(e^{L_{f}\tau(u)}-1\right).
\end{array}
\end{equation}
 \qed
\end{proof}

\subsection{Properties of the Approximation of $\mathcal{U}$ by $\mathcal{U}_{R}$ \label{sec:discretization}}


Note that Lemma \ref{lem:density} is not a statement about the dispersion of $\mathcal{U}_R$ in $\mathcal{U}$ which does not actually converge. For numerical approximations of function spaces the weaker statement that $\limsup_{R\rightarrow \infty}\mathcal{U}_R$ is dense in $\mathcal{U}$ will be sufficient. Equivalently,
\begin{lemma}
\label{lem:density}For each $u\in\mathcal{U}$ and $\varepsilon>0$,
there exists $R^{*}>0$ such that for any $R>R^{*}$ there exists
$w\in\mathcal{U}_{R}$ such that $d_{\mathcal{U}}(u,w)<\varepsilon$.\end{lemma}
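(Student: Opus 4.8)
The plan is to approximate an arbitrary $u \in \mathcal{U}$ in two independent steps: first in the time direction (truncating/adjusting to a domain of the form $[0, c\cdot i/R]$ and rounding the control values to $\Omega_R$), and second in the "amplitude" direction (replacing $u$ on each subinterval $[c(j-1)/R, cj/R)$ by a constant value from $\Omega_R$), and then to bound $d_{\mathcal{U}}(u,w)$ using the explicit formula (\ref{eq:du}). Fix $u$ with domain $[0,\tau]$ and $\varepsilon>0$. Choose $R$ large enough that $\tau(u)$ is at most $h(R)\cdot c/R$ (possible since $h(R)/R\to\infty$ by (\ref{eq:h_constraint})) and let $i$ be the smallest integer with $c\cdot i/R \geq \tau$; then $\tau_w := c\cdot i/R$ satisfies $|\tau_w - \tau| \leq c/R$, contributing at most $u_{max}\cdot c/R$ to $d_{\mathcal{U}}$, which is $O(1/R)$.

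For the $L_1$ part of the metric, I would split the integral over $[0,\min(\tau,\tau_w)] = [0,\tau]$ into the $R$-scaled subintervals. On each such subinterval the piecewise-constant approximant $w$ takes a value $\omega_j \in \Omega_R$; the natural choice is a point of $\Omega_R$ closest to the average (or to some sampled value) of $u$ on that subinterval. Here I would invoke the hypothesis that the dispersion of $\Omega_R$ in $\Omega$ tends to zero: there is a function $\delta(R)\to 0$ such that every point of $\Omega$ is within $\delta(R)$ of $\Omega_R$. Combined with the observation that $u$ restricted to $[0,\tau]$ is an $L_1$ function, I can split each subinterval's contribution into (i) the intrinsic variation of $u$ on the subinterval and (ii) a dispersion term of size at most $\delta(R)$ times the subinterval length. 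Summing over the (at most $h(R)$) subintervals, term (ii) totals at most $\tau\cdot\delta(R)\to 0$.

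The main obstacle is term (i): controlling $\sum_j \int_{I_j} \|u(t) - a_j\| \, d\mu(t)$ where $a_j$ is the best constant approximation of $u$ on $I_j$. This is where genuine care is needed, because $u$ is only assumed $L_1$, not continuous, so one cannot use uniform continuity on subintervals. The clean way is to first approximate $u$ in $L_1([0,\tau])$ by a step function $\tilde u$ that is constant on a \emph{finite} partition of $[0,\tau]$ (possible by density of step functions in $L_1$, with $\|u-\tilde u\|_{L_1} < \varepsilon/4$ say), then refine $R$ so that the $R$-grid subdivides (up to small error) the breakpoints of $\tilde u$; on each $R$-subinterval $\tilde u$ is essentially constant, so the constant approximation error there is small and the total is controlled by $\|u - \tilde u\|_{L_1}$ plus a vanishing discretization-of-breakpoints term. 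Alternatively one can quote Lemma \ref{lem:density}'s own statement if it were already available, but since this \emph{is} that lemma the step-function reduction is the honest route.

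Putting it together: choosing $R^*$ large enough that the time-mismatch term, the dispersion term $\tau\delta(R)$, and the step-function approximation residual are each below $\varepsilon/3$ (and, for $R > R^*$, noting all these bounds are monotone in the relevant sense or can be re-derived), we obtain $w \in \mathcal{U}_R$ with $d_{\mathcal{U}}(u,w) < \varepsilon$. I would also double-check the edge case where the last subinterval is only partially covered by $[0,\tau]$, which merely changes a subinterval length and is absorbed into the same bounds, and the requirement that the control values chosen genuinely lie in $\Omega_R \subset \Omega$ so that $w$ is a legitimate element of $\mathcal{U}_R$.
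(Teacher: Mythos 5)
Your argument is correct and reaches the same conclusion, but it takes a genuinely different route through the measure theory. The paper first regularizes $u$ via Lusin's Theorem, obtaining a \emph{continuous} $\Omega$-valued function $\upsilon$ agreeing with $u$ off a set of measure $O(\varepsilon/u_{max})$, and then exploits uniform continuity of $\upsilon$ on the compact domain to make the piecewise-constant grid approximation small on every subinterval; the grid error is controlled by the modulus of continuity plus the dispersion of $\Omega_R$. You instead regularize $u$ by a \emph{step function} $\tilde u$ (density of step functions in $L_1$), and control the grid error by aligning the $R$-grid with the finitely many breakpoints of $\tilde u$, absorbing the misaligned subintervals into an $O(1/R)$ term. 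Both are two-stage approximations (regularize, then discretize time and amplitude), and both handle the domain-length mismatch and the dispersion of $\Omega_R$ identically; your version is arguably more elementary since it avoids Lusin's Theorem and the modulus-of-continuity bookkeeping. The one point you flag but do not fully close is the only real soft spot: the intermediate approximant must take values in $\Omega$ so that its values can be rounded to $\Omega_R$ (note $\Omega$ need not be convex, e.g.\ $\Vert u\Vert_2=1$ in the shortest-path example, so averaging $u$ over a subinterval can leave $\Omega$). This is repaired by choosing $\tilde u$ to be an $\Omega$-valued step function, which is always possible: approximate $u$ uniformly by a simple function whose values are drawn from the range of $u$, then approximate the level sets by finite unions of intervals. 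With that fix your proof is complete.
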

\begin{proof}

By Lusin's Theorem~\cite[pg. 41]{kolmogorov1961elements}, there exists
a continuous $\upsilon:[0,\tau(u)]\rightarrow\Omega$ such that 
\begin{equation}
\mu(\{t\in[0,\tau(u)]:\:\upsilon(t)\neq u(t)\})<\frac{\varepsilon}{3u_{max}}.
\end{equation}
The domain of $\upsilon$ is compact so $\upsilon$ is also uniformly continuous. Denote its modulus of continuity by $\delta(\epsilon)$,
\begin{equation}
|\sigma-\gamma|<\delta(\epsilon)\Rightarrow\Vert\upsilon(\sigma)-\upsilon(\gamma)\Vert_{2}<\epsilon.
\end{equation}

To construct an approximation of $\upsilon$ by $w\in\mathcal{U}_{R}$
choose $R$ sufficiently large so that (i) $h(R)/R>\tau(u)$,
(ii) there exists an integer $r$ such that $0<\tau(u)-r/R<1/R<\delta\left(\frac{\varepsilon}{6\tau(u)}\right)$,
and (iii) the dispersion of $\Omega_{R}$ in $\Omega$ is less than $\frac{\varepsilon}{6\tau(u)}$. 

Then for $t\in[(i-1)/R,i/R)$ $i=1,...,r$ there exists ${\rm v}_{i}\in\Omega_{R}$
such that $\Vert {\rm v}_{i}-\upsilon(t)\Vert_{2}<\frac{\varepsilon}{3\tau(u)}.$
Select $w\in\mathcal{U}_{R}$ which is equal to ${\rm v}_{i}$ on
each interval. Combining (i)-(iii), 
\begin{equation}
\begin{array}{rcl}
d_{\mathcal{U}}(\upsilon,w) & = & \int_{[0,r/R]}\Vert w(t)-\omega(t)\Vert_{2},\, d\mu(t)+u_{max}|r/R-\tau(\omega)|\\
 & < & \int_{[0,r/R]}\Vert w(t)-\omega(t)\Vert_{2},\, d\mu(t)+\frac{\varepsilon}{6}\\
 & < & \int_{[0,r/R]}\frac{\varepsilon}{3\tau(u)},\, d\mu(t)+\frac{\varepsilon}{6}\\
 & < & \frac{\varepsilon}{2}.
\end{array}
\end{equation}
Thus, by the triangle inequality
\begin{equation}
d_{\mathcal{U}}(u,w)\leq d_{\mathcal{U}}(u,\upsilon)+d(\upsilon,w)<\varepsilon.
\end{equation}
\qed
\end{proof}

We use $cl(\cdot)$ and $int(\cdot)$ to denote the closure and interior of subsets of $\mathcal{U}$.
\begin{lemma}
\label{lem:no_limit_points}For any $w\in cl\left(int\left(\mathcal{U}_\mathrm{goal}\right)\right)$
and $\varepsilon>0$ there exists $R^{*}>0$ such that for any $R>R^{*}$
\begin{equation}
\min_{u\in\mathcal{U}_{R}\cap\mathcal{U}_\mathrm{goal}}\left\{ \left|J_{x_\mathrm{ic}}(u)-J_{x_\mathrm{ic}}(w)\right|\right\} <\varepsilon.
\end{equation}
\end{lemma}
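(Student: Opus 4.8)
The plan is to combine the continuity of $J_{x_\mathrm{ic}}$ (Lemma \ref{lem:cost_cont}) with the asymptotic density of $\bigcup_{R}\mathcal{U}_{R}$ supplied by Lemma \ref{lem:density}, routing the argument through a fixed interior point of $\mathcal{U}_\mathrm{goal}$ lying close to $w$. We cannot approximate $w$ directly by elements of $\mathcal{U}_{R}$ for two reasons: $w$ itself need not belong to $\mathcal{U}_\mathrm{goal}$, and Lemma \ref{lem:density} is not uniform in the point being approximated, so the radius of the neighbourhood we work in must be pinned down before $R$ is selected. The hypothesis $w\in cl(int(\mathcal{U}_\mathrm{goal}))$ is exactly what is needed to produce such a point with a genuine open neighbourhood inside $\mathcal{U}_\mathrm{goal}$.

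First I would use the definition of closure: since $w\in cl(int(\mathcal{U}_\mathrm{goal}))$, for every $\delta>0$ there is $v\in int(\mathcal{U}_\mathrm{goal})$ with $d_{\mathcal{U}}(v,w)<\delta$. By continuity of $J_{x_\mathrm{ic}}$ at $w$, I would choose $\delta$ small enough that this already forces $|J_{x_\mathrm{ic}}(v)-J_{x_\mathrm{ic}}(w)|<\varepsilon/2$, and fix one such $v$. Since $v$ lies in the interior of $\mathcal{U}_\mathrm{goal}$, there is $\rho>0$ such that the open $d_{\mathcal{U}}$-ball $B(v,\rho)$ is contained in $\mathcal{U}_\mathrm{goal}$; shrinking $\rho$ if necessary, using continuity of $J_{x_\mathrm{ic}}$ at $v$, I may additionally assume $|J_{x_\mathrm{ic}}(u)-J_{x_\mathrm{ic}}(v)|<\varepsilon/2$ for every $u\in B(v,\rho)$.

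Finally I would apply Lemma \ref{lem:density} to the point $v$ with tolerance $\rho$: there is $R^{*}>0$ such that for every $R>R^{*}$ some $u_{R}\in\mathcal{U}_{R}$ satisfies $d_{\mathcal{U}}(u_{R},v)<\rho$. For such $R$ we then have $u_{R}\in B(v,\rho)\subset\mathcal{U}_\mathrm{goal}$, hence $u_{R}\in\mathcal{U}_{R}\cap\mathcal{U}_\mathrm{goal}$; in particular this set is nonempty (and finite, since $\mathcal{U}_R$ is), so the minimum in the statement is attained. The triangle inequality gives $|J_{x_\mathrm{ic}}(u_{R})-J_{x_\mathrm{ic}}(w)|\le|J_{x_\mathrm{ic}}(u_{R})-J_{x_\mathrm{ic}}(v)|+|J_{x_\mathrm{ic}}(v)-J_{x_\mathrm{ic}}(w)|<\varepsilon$, so the minimum over $\mathcal{U}_{R}\cap\mathcal{U}_\mathrm{goal}$ is strictly below $\varepsilon$ for all $R>R^{*}$, as claimed.

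The only real subtlety, and the step I would be most careful to state correctly, is the quantifier order: Lemma \ref{lem:density} must be invoked for the \emph{fixed} point $v$ and the \emph{fixed} radius $\rho$ determined in the second paragraph, not for $w$; attempting to approximate the moving point $w$ uniformly would fail because $\mathcal{U}_R$ is not uniformly dense in $\mathcal{U}$. Everything else is two applications of continuity of $J_{x_\mathrm{ic}}$ together with one application of Lemma \ref{lem:density}.
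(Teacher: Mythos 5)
Your proof is correct and follows essentially the same route as the paper: pass from $w$ to an interior point $v$ of $\mathcal{U}_\mathrm{goal}$ with a ball $B(v,\rho)\subset\mathcal{U}_\mathrm{goal}$, invoke Lemma \ref{lem:density} at $v$ with tolerance $\rho$ so the approximating signal lands in $\mathcal{U}_{R}\cap\mathcal{U}_\mathrm{goal}$, and finish with continuity of $J_{x_\mathrm{ic}}$. Your splitting of the cost estimate into two $\varepsilon/2$ continuity steps (at $w$ and at $v$) and your explicit care with the quantifier order are only minor bookkeeping refinements of the paper's argument, which uses a single continuity estimate at $w$.
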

\begin{proof}
$\omega\in cl\left(int\left(\mathcal{U}_\mathrm{goal}\right)\right)$ implies
that for all $\delta>0$, $B_{\delta/2}(\omega)\cap int(\mathcal{U}_\mathrm{goal})\neq\emptyset$.
Then each $\upsilon\in int(\mathcal{U}_\mathrm{goal})$ has a neighbourhood
$B_{\rho}(\upsilon)\subset int(\mathcal{U}_\mathrm{goal})$ with $\rho<\frac{\delta}{2}$.
Take $\upsilon\in B_{\delta/2}(\omega)\cap int(\mathcal{U}_{goal.})$.
By Lemma \ref{lem:density}, for sufficiently large $R$, there exists $u\in\mathcal{U}_{R}$ such
that $d_{\mathcal{U}}(\upsilon,u)<\delta/2$ which implies $u\in B_{\rho}(\upsilon)\subset int(\mathcal{U}_\mathrm{goal})$.
Then $u\in\mathcal{U}_\mathrm{goal}$ and $d_{\mathcal{U}}(\omega,u)<d_{\mathcal{U}}(\omega,\upsilon)+d_{\mathcal{U}}(\upsilon,u)<\delta$. Now by the continuity of $J_{x_\mathrm{ic}}$ for $\delta$ sufficiently small, $d_{\mathcal{U}}(\omega,\upsilon)<\delta$ implies $|J_{x_\mathrm{ic}}(u)-J_{x_\mathrm{ic}}(w)| <\varepsilon$ from which the result follows.
\qed
\end{proof}
A sufficient condition for every $\omega\in\mathcal{U}_\mathrm{goal}$ to
be contained in the closure of the interior of $\mathcal{U}_\mathrm{goal}$ is that $\mathcal{U}_\mathrm{goal}$ be open which is the case when Assumption A-1 is satisfied. 
%

\subsection{Pruning $\mathcal{U}_{R}$ with the GLC Condition \label{sec:pruning}}

To describe trajectories remaining on the $\varepsilon$-interior of $X_\mathrm{free}$ at each instant and which terminate on the $\varepsilon$-interior of $X_\mathrm{goal}$ the following sets are defined,
\begin{equation}
\begin{array}{l}
\mathcal{X}_\mathrm{goal}^{\varepsilon}\coloneqq\left\{ x\in\mathcal{X}_\mathrm{goal}:\,\; B_{\varepsilon}(x(t))\subset X_\mathrm{free}\, \forall t\in[0,\tau(x)], \, B_{\varepsilon}(x(\tau(x))\subset X_\mathrm{goal}\right\} ,\\
\mathcal{U}_\mathrm{goal}^{\varepsilon}\coloneqq\left\{ u\in\mathcal{U}_\mathrm{goal}:\,\varphi_{x_\mathrm{ic}}(u)\in\mathcal{X}_\mathrm{goal}^{\varepsilon}\right\} ,
\end{array}\label{eq:interiors}
\end{equation}
and similarly for $c_{R}$,
\begin{equation}
c_{R}^{\varepsilon}\coloneqq\min_{u\in\mathcal{U}_{R}\cap\mathcal{U}_\mathrm{goal}^{\varepsilon}}\left\{ J_{x_\mathrm{ic}}(u)\right\} .\label{eq:cr_ep}
\end{equation}
 Since $\ensuremath{\mathcal{U}_\mathrm{goal}^{\varepsilon}\subset\mathcal{U}_\mathrm{goal}}$
we have that $0\leq c_{R}\leq c_{R}^{\varepsilon}$. 
\begin{lemma}
\label{lem:approx_equal_optimal}If $\lim_{R\rightarrow\infty}\epsilon(R)=0$
then $\lim_{R\rightarrow\infty}c_{R}^{\epsilon(R)}=c^{*}$.\end{lemma}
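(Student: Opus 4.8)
The plan is to prove the two one‑sided bounds $\liminf_{R\to\infty} c_R^{\epsilon(R)} \ge c^*$ and $\limsup_{R\to\infty} c_R^{\epsilon(R)} \le c^*$ separately. The lower bound is essentially free: for every $\varepsilon>0$ we have $\mathcal{U}_\mathrm{goal}^{\varepsilon}\subset\mathcal{U}_\mathrm{goal}$, so any $u\in\mathcal{U}_R\cap\mathcal{U}_\mathrm{goal}^{\epsilon(R)}$ satisfies $J_{x_\mathrm{ic}}(u)\ge\inf_{v\in\mathcal{U}_\mathrm{goal}}J_{x_\mathrm{ic}}(v)=c^*$, hence $c_R^{\epsilon(R)}\ge c^*$ for every $R$ (with $\min\emptyset=\infty$ this also covers the vacuous case $\mathcal{U}_\mathrm{goal}=\emptyset$, where $c^*=\infty$ and nothing further is needed). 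So the work is in the upper bound, which we may assume $c^*<\infty$.

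Fix $\delta>0$. By definition of $c^*$ as an infimum, choose $\bar u\in\mathcal{U}_\mathrm{goal}$ with $J_{x_\mathrm{ic}}(\bar u)<c^*+\delta/2$, and write $\bar x:=\varphi_{x_\mathrm{ic}}(\bar u)$, $\bar\tau:=\tau(\bar u)$. Since $X_\mathrm{free}$ and $X_\mathrm{goal}$ are open (A-1), the compact set $\bar x([0,\bar\tau])$ lies at a positive distance from the complement of $X_\mathrm{free}$ and $\bar x(\bar\tau)$ has a ball around it inside $X_\mathrm{goal}$; hence there is a margin $\varepsilon^*>0$ with $B_{\varepsilon^*}(\bar x(t))\subset X_\mathrm{free}$ for all $t\in[0,\bar\tau]$ and $B_{\varepsilon^*}(\bar x(\bar\tau))\subset X_\mathrm{goal}$, i.e. $\bar u\in\mathcal{U}_\mathrm{goal}^{\varepsilon^*}$.

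Next I would approximate $\bar u$ from $\mathcal{U}_R$. By Lemma \ref{lem:density} there exist $w_R\in\mathcal{U}_R$ with $d_{\mathcal{U}}(\bar u,w_R)\to0$; set $x_R:=\varphi_{x_\mathrm{ic}}(w_R)$ and $\tau_R:=\tau(w_R)$. Continuity of $\varphi_{x_\mathrm{ic}}:\mathcal{U}\to\mathcal{X}_{x_\mathrm{ic}}$ gives $d_{\mathcal{X}}(x_R,\bar x)\to0$, and continuity of $J_{x_\mathrm{ic}}$ (Lemma \ref{lem:cost_cont}) gives $J_{x_\mathrm{ic}}(w_R)\to J_{x_\mathrm{ic}}(\bar u)$. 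The crucial step is to verify $w_R\in\mathcal{U}_\mathrm{goal}^{\epsilon(R)}$ for large $R$. Using that every element of $\mathcal{X}_{x_\mathrm{ic}}$ is $M$-Lipschitz, one checks that for all $t\in[0,\tau_R]$ the point $x_R(t)$ is within $2\,d_{\mathcal{X}}(x_R,\bar x)$ of $\bar x(\min\{t,\bar\tau\})$: when $t\le\bar\tau$ the sup term of $d_{\mathcal{X}}$ already bounds $\|x_R(t)-\bar x(t)\|_2$, and when $t>\bar\tau$ one travels at most $M|\tau_R-\bar\tau|\le d_{\mathcal{X}}(x_R,\bar x)$ from $x_R(\bar\tau)$ to $x_R(t)$; similarly $\|x_R(\tau_R)-\bar x(\bar\tau)\|_2\le 2\,d_{\mathcal{X}}(x_R,\bar x)$. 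Therefore, once $2\,d_{\mathcal{X}}(x_R,\bar x)+\epsilon(R)\le\varepsilon^*$ — which holds for all large $R$ because $d_{\mathcal{X}}(x_R,\bar x)\to0$ and $\epsilon(R)\to0$ — we get $B_{\epsilon(R)}(x_R(t))\subset B_{\varepsilon^*}(\bar x(\cdot))\subset X_\mathrm{free}$ for every $t$ and $B_{\epsilon(R)}(x_R(\tau_R))\subset X_\mathrm{goal}$, i.e. $w_R\in\mathcal{U}_R\cap\mathcal{U}_\mathrm{goal}^{\epsilon(R)}$. Hence $c_R^{\epsilon(R)}\le J_{x_\mathrm{ic}}(w_R)$, and letting $R\to\infty$ yields $\limsup_R c_R^{\epsilon(R)}\le J_{x_\mathrm{ic}}(\bar u)<c^*+\delta$. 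Since $\delta$ was arbitrary, $\limsup_R c_R^{\epsilon(R)}\le c^*$, and combined with the lower bound this gives the claim.

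The main obstacle is exactly that middle claim: a density argument on its own only controls the cost of $w_R$, not whether its trajectory keeps the prescribed clearance $\epsilon(R)$ from $\partial X_\mathrm{free}$ and $\partial X_\mathrm{goal}$. Converting $d_{\mathcal{U}}$-closeness into a uniform‑in‑time pointwise clearance bound — including near the possibly shifted terminal time, where the $M$-Lipschitz property of trajectories must be invoked — and then dividing the fixed budget $\varepsilon^*$ between the approximation error $2d_{\mathcal{X}}(x_R,\bar x)$ and the margin $\epsilon(R)$ is the only nonroutine part; the remainder is bookkeeping with infima and the previously established continuity lemmas.
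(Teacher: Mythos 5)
Your proof is correct, and its skeleton matches the paper's: pick a near-optimal $\omega\in\mathcal{U}_\mathrm{goal}$, approximate it from $\mathcal{U}_R$ via Lemma \ref{lem:density}, control the cost gap with the continuity of $J_{x_\mathrm{ic}}$ (Lemma \ref{lem:cost_cont}), and show the approximant lies in $\mathcal{U}_\mathrm{goal}^{\epsilon(R)}$ once $\epsilon(R)$ is small, then let the slack go to zero. Where you genuinely diverge is in how the clearance step is justified. The paper stays in the signal space: it uses openness of $\mathcal{U}_\mathrm{goal}$ in $(\mathcal{U},d_{\mathcal{U}})$ and continuity of $\varphi_{x_\mathrm{ic}}$ to assert $\omega\in\mathcal{U}_\mathrm{goal}^{\tilde r}$ and $B_{r}(\omega)\subset\mathcal{U}_\mathrm{goal}^{\rho/2}$, then takes $u\in B_{\delta}(\omega)\cap\mathcal{U}_R$ and requires $\epsilon(R)<\rho/2$; these inclusions are stated without detail. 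You instead extract a state-space margin $\varepsilon^*$ from compactness of the trajectory image together with openness of $X_\mathrm{free}$ and $X_\mathrm{goal}$ (A-1), and then explicitly convert $d_{\mathcal{X}}$-closeness into a uniform pointwise bound of $2\,d_{\mathcal{X}}(x_R,\bar x)$, using the $M$-Lipschitz property to handle the mismatch in terminal times, before splitting the budget $\varepsilon^*$ between approximation error and $\epsilon(R)$. This buys rigor precisely at the point the paper compresses (its unproved inclusions in fact need an argument of the kind you give), at the price of more bookkeeping; your explicit $\liminf$/$\limsup$ split and the remark covering the empty-goal case are likewise cleaner than the paper's implicit use of the trivial bound $c_R^{\epsilon(R)}\geq c^{*}$, while the paper's version is shorter and feeds directly into its choice of $\delta$, $\rho$, and $R^{*}$.
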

\begin{proof}
By the definition of $c^{*}$ in (\ref{eq:meaningful_problem}), for any
$\varepsilon>0$ there exists $\omega\in\mathcal{U}_\mathrm{goal}$ such
that $J_{x_\mathrm{ic}}(\omega)-\varepsilon/2<c^{*}$. Since
$\mathcal{U}_\mathrm{goal}$ is open and $\varphi_{x_\mathrm{ic}}$ is continuous there exists $\tilde{r}>0$ and $\rho>0$
such that $B_{\tilde{r}}(\omega)\subset\mathcal{U}_\mathrm{goal}$ and $\varphi_{x_\mathrm{ic}}(B_{\tilde{r}}(\omega))\subset B_{\rho}(\varphi_{x_\mathrm{ic}}(\omega))$.
Thus, $\omega\in\mathcal{U}_\mathrm{goal}^{\tilde{r}}$. Similarly, there exists a positive $r<\tilde{r}$ such that $\varphi_{x_\mathrm{ic}}(B_{r}(\omega))\subset B_{\rho/2}(\varphi_{x_\mathrm{ic}}(\omega))$
and $B_{r}(\omega)\subset\mathcal{U}_\mathrm{goal}^{\rho/2}$. From the continuity
of $J$ in Lemma \ref{lem:cost_cont} there also exists a positive
$\delta<r$ such that for any signal $\upsilon$ with $d_{\mathcal{U}}(\upsilon,\omega)<\delta$
we have $|J_{x_\mathrm{ic}}(\omega)-J_{x_\mathrm{ic}}(\upsilon)|<\varepsilon/2$.

Next, choose $R^{*}$ to be sufficiently large such that $R>R^*$ implies $\mbox{\ensuremath{\epsilon}}(R)<\rho/2$
and $B_{\delta}(\omega)\cap\mathcal{U}_{R}\neq\emptyset$. Such a resolution $R^{*}$ exists by
Lemma \ref{lem:no_limit_points} and the assumption $\lim_{R\rightarrow\infty}\epsilon(R)=0$.
Now choose $u\in B_{\delta}(\omega)\cap\mathcal{U}_{R}$.
Then $|J_{x_\mathrm{ic}}(u)-J_{x_\mathrm{ic}}(\omega)|<\varepsilon/2$ and $u\in\mathcal{U}_\mathrm{goal}^{\rho/2}\subset\mathcal{U}_\mathrm{goal}^{\epsilon(R)}$.
Then by definition of $c_{R}^{\epsilon(R)}$, $u\in\mathcal{U}_\mathrm{goal}^{\epsilon(R)}$
implies $c_{R}^{\epsilon(R)}\leq J_{x_\mathrm{ic}}(u)$. Finally, by triangle
inequality, 
\begin{equation}
|J_{x_\mathrm{ic}}(u)-c^{*}|<|J_{x_\mathrm{ic}}(u)-J_{x_\mathrm{ic}}(\omega)|+|J_{x_\mathrm{ic}}(\omega)-c^{*}|<\varepsilon.
\end{equation}
Rearranging the expression yields $J_{x_\mathrm{ic}}(u)<c^{*}+\varepsilon$
and thus, $c_{R}^{\epsilon(R)}<c^{*}+\varepsilon$. The result follows
since the choice of $\varepsilon$ is arbitrary.
\qed
\end{proof}

To simplify the notation in what follows, a concatenation operation on elements of $\mathcal{U}$ is defined.
For $u_{1},u_{2}\in\mathcal{U}$, their concatenation $u_{1}u_{2}$
is defined by 
\begin{equation}
[u_{1}u_{2}](t)\coloneqq\left\{ \begin{array}{c}
u_{1}(t),\, t\in[0,\tau(u_{1}))\\
u_{2}(t-\tau(u_{1})),\, t\in[\tau(u_{1}),\tau(u_{1})+\tau(u_{2})]
\end{array}\right..\label{eq:concatenation}
\end{equation}
The concatenation operation will be useful together
with the following equalities which are readily verified, 
\begin{equation}\label{eq:cost_homo}
J_{x_{0}}(u_{1}u_{2})=J_{x_{0}}(u_{1})+J_{[\varphi_{x_0}(u_1)](\tau(u_1))}(u_{2})
\end{equation}
\begin{equation}
\ensuremath{\varphi_{x_{0}}(u_{1}u_{2})=\varphi_{x_{0}}(u_{1})\varphi_{[\varphi_{x_{0}}(u_1)](\tau(u))}(u_{2})}.\label{eq:x_concat}
\end{equation}
The concatenation operation on $\mathcal{X}_{x_{0}}$ in (\ref{eq:x_concat})
is defined in the same way as in (\ref{eq:concatenation}). 
\begin{lemma}
\label{lem:pruning}
Let $\delta_0=\frac{\sqrt{n}}{\eta(R)}e^{\frac{L_{f}h(R)}{R}}$.
For $\varepsilon\geq \delta_0$ and $u_{i},u_{j}\in\mathcal{U}_{R}\cap\mathcal{U}_\mathrm{feas}$, if
$u_{i}\prec_{R}u_{j}$, then for each descendant of $u_{j}$ in $\mathcal{U}_\mathrm{goal}^{\varepsilon-\delta_0}$
with cost $c_{j}$, there exists a descendant of $u_{i}$ in $\mathcal{U}_\mathrm{goal}$
with cost $c_{i}\leq c_{j}$.
\end{lemma}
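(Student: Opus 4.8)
The plan is to \emph{graft} onto $u_i$ the same control suffix that carries $u_j$ to the goal. Write the given descendant of $u_j$ as $w=u_j v$, where $v$ is the tail signal --- piecewise constant on the $\mathcal{U}_R$ grid, with $\tau(v)=\tau(w)-\tau(u_j)$ --- and take the candidate to be $w':=u_i v$, the descendant of $u_i$ obtained by appending $v$. Three things must then be checked: (i) $w'\in\mathcal{U}_R$, so that $w'$ is genuinely a descendant of $u_i$ in the tree; (ii) $w'\in\mathcal{U}_\mathrm{goal}$; and (iii) $J_{x_\mathrm{ic}}(w')\le J_{x_\mathrm{ic}}(w)=c_j$, so that $c_i:=J_{x_\mathrm{ic}}(w')$ is the cost claimed. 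Item (i) is bookkeeping: by \ref{glc2}, $\tau(u_i)\le\tau(u_j)$, hence $\mathrm{depth}(u_i)\le\mathrm{depth}(u_j)$ since both domains are integer multiples of the step $c/R$; therefore $\mathrm{depth}(w')=\mathrm{depth}(u_i)+\bigl(\mathrm{depth}(w)-\mathrm{depth}(u_j)\bigr)\le\mathrm{depth}(w)<h(R)$, so $w'$ respects the horizon limit and, being assembled from $\Omega_R$-valued pieces on the grid, lies in $\mathcal{U}_R$.

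For (iii) I would apply the cost identity (\ref{eq:cost_homo}). Set $p_i:=[\varphi_{x_\mathrm{ic}}(u_i)](\tau(u_i))$ and $p_j:=[\varphi_{x_\mathrm{ic}}(u_j)](\tau(u_j))$, so that $J_{x_\mathrm{ic}}(w')=J_{x_\mathrm{ic}}(u_i)+J_{p_i}(v)$ and $J_{x_\mathrm{ic}}(w)=J_{x_\mathrm{ic}}(u_j)+J_{p_j}(v)$. By \ref{glc1} the terminal states $p_i,p_j$ lie in a common hypercube, so $\|p_i-p_j\|_2\le\sqrt n/\eta(R)$; and since $\tau(v)$ is at most the horizon $h(R)/R$, Lemma~\ref{lem:cost_sensitivity} gives $J_{p_i}(v)\le J_{p_j}(v)+\frac{\sqrt n}{\eta(R)}\frac{L_g}{L_f}\bigl(e^{L_f h(R)/R}-1\bigr)$. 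Adding $J_{x_\mathrm{ic}}(u_i)$ and invoking \ref{glc3} --- whose cost threshold is exactly this quantity --- the estimate telescopes to $J_{x_\mathrm{ic}}(w')\le J_{x_\mathrm{ic}}(u_j)+J_{p_j}(v)=J_{x_\mathrm{ic}}(w)=c_j$. So the threshold in \ref{glc3} is calibrated precisely so that a $\prec_R$-dominated prefix can be swapped out without raising the total cost.

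The real content is (ii). By the trajectory-concatenation identity (\ref{eq:x_concat}), $\varphi_{x_\mathrm{ic}}(w')$ agrees with $\varphi_{x_\mathrm{ic}}(u_i)$ on $[0,\tau(u_i)]$ --- which lies in $X_\mathrm{free}$ because $u_i\in\mathcal{U}_\mathrm{feas}$ --- and with the shifted tail $\varphi_{p_i}(v)$ afterwards. I would compare this tail with the tail $\varphi_{p_j}(v)$ of $\varphi_{x_\mathrm{ic}}(w)$, which lies in $X_\mathrm{free}$ because $w$ is feasible, using the initial-condition sensitivity bound (\ref{lem:cont_ic}): together with $\|p_i-p_j\|_2\le\sqrt n/\eta(R)$ and $\tau(v)\le h(R)/R$ it yields $\|[\varphi_{p_i}(v)](s)-[\varphi_{p_j}(v)](s)\|_2\le\frac{\sqrt n}{\eta(R)}e^{L_f h(R)/R}=\delta_0$ for all $s\in[0,\tau(v)]$. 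Since $w\in\mathcal{U}_\mathrm{goal}^{\varepsilon-\delta_0}$, around each point of $\varphi_{p_j}(v)$ there is a ball of radius $\varepsilon-\delta_0$ contained in $X_\mathrm{free}$ and its endpoint lies $(\varepsilon-\delta_0)$-deep in $X_\mathrm{goal}$; accounting for the $\delta_0$-displacement --- which is where the hypothesis $\varepsilon\ge\delta_0$ is used --- the tail $\varphi_{p_i}(v)$ stays in $X_\mathrm{free}$ and terminates in $X_\mathrm{goal}$, so $\varphi_{x_\mathrm{ic}}(w')\in\mathcal{X}_\mathrm{goal}$, i.e.\ $w'\in\mathcal{U}_\mathrm{goal}$. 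I expect (ii) to be the step needing care, for two reasons. First, the Gr\"onwall bound (\ref{lem:cont_ic}) and the constant $L_f$ are valid only while both trajectories stay in $X_\mathrm{free}$, so the displacement estimate should be run as a standard continuation argument on $[0,\tau(v)]$: as long as $\varphi_{p_i}(v)$ has remained in $X_\mathrm{free}$ it is $\delta_0$-close to $\varphi_{p_j}(v)$, hence it cannot exit $X_\mathrm{free}$ before time $\tau(v)$ given the clearance of $\varphi_{p_j}(v)$. Second, (ii) must be established before (iii), because it provides the inclusion $\varphi_{p_i}(v)\subset X_\mathrm{free}$ needed to apply Lemma~\ref{lem:cost_sensitivity} to the tail. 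With (ii) in hand, (i) and the telescoping estimate (iii) are immediate from the concatenation identities (\ref{eq:cost_homo}) and (\ref{eq:x_concat}).
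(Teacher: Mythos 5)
Your proposal is correct and follows essentially the same route as the paper's proof of Lemma~\ref{lem:pruning}: graft the suffix $v$ onto $u_i$, bound the displacement of the shifted tail by $\delta_0$ using \ref{glc1} together with the sensitivity estimate \eqref{lem:cont_ic}, and telescope the cost via \eqref{eq:cost_homo}, Lemma~\ref{lem:cost_sensitivity}, and \ref{glc3}. If anything you are more careful than the paper, which omits both the depth/horizon bookkeeping showing $u_iv\in\mathcal{U}_R$ and the continuation argument needed to justify applying the Lipschitz constants while the displaced tail remains in $X_\mathrm{free}$.
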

\begin{proof}
Suppose there is a $w\in\mathcal{U}_{R}$ such that $u_{j}w\in\mathcal{U}_{R}$
is a descendant of $u_{j}$ and $u_{j}w\in\mathcal{U}_\mathrm{goal}^{\varepsilon}.$
 Since $u_{i}\overset{R}{\sim}u_{j}$,
\begin{equation}
\left\Vert[\varphi_{x_\mathrm{ic}}(u_{i}w)](\tau(u_{i}))-[\varphi_{x_\mathrm{ic}}(u_{j}w)](\tau(u_{j}))\right\Vert_2\leq \frac{\sqrt{n}}{\eta(R)}.
\end{equation}
Then, by equation (\ref{lem:cont_ic}), for all $t\in[0,\tau(w)]\subset[0,h(R)/R]$, 
\begin{equation}
\Vert[\varphi_{x_\mathrm{ic}}(u_{i}w)](t+\tau(u_{i}))-[\varphi_{x_\mathrm{ic}}(u_{j}w)](t+\tau(u_{j}))\Vert\leq\frac{\sqrt{n}}{\eta(R)}e^{\frac{L_{f}h(R)}{R}}.\label{eq:-8}
\end{equation}
Then for $\delta_0=\frac{\sqrt{n}}{\eta(R)}e^{\frac{L_{f}h(R)}{R}}$
we have $u_{i} w \in \mathcal{U}_\mathrm{goal}^{\varepsilon-\delta_0}.$ 
As for the cost, from equation (\ref{eq:cost_homo}) 
\begin{equation}
\begin{array}{rcl}
J_{x_\mathrm{ic}}(u_{i} w) & = & J_{x_\mathrm{ic}}(u_{i})+J_{[\varphi_{x_\mathrm{ic}}(u_{i})](\tau(u_{i}))}(w)\\
 & \leq & J_{x_\mathrm{ic}}(u_{j})+\frac{\sqrt{n}}{\eta(R)}\frac{L_{g}}{L_{f}}\left(e^{\frac{L_{f}h(R)}{R}}-1\right) +J_{[\varphi_{x_\mathrm{ic}}(u_{j})](\tau(u_{j}))}(w)\\
 & \leq & J_{x_\mathrm{ic}}(u_{j})+J_{[\varphi_{x_\mathrm{ic}}(u_{j})](\tau(u_{j}))}(w)\\
 & \leq  & J_{x_\mathrm{ic}}(u_{j}w)\\
\end{array}
\end{equation}
 The first step applies the conditions for $u_{i}\prec_{R}u_{j}$.
The second step combines Lemma \ref{lem:cost_sensitivity} and $u_{i}\overset{N}{\sim}u_{j}$. 
\qed
\end{proof}
In the above Lemma, the quantity $\delta_0$ was constructed based on the radius of the partition of $X_\mathrm{free}$ and the sensitivity of solutions to initial conditions. 
Let $\delta_k$ be defined as a finite sum of related quantities, $\delta_k\coloneqq\sum_{i=1}^{k}\frac{\sqrt{n}}{\eta(R)}e^{\frac{L_{f}(h(R)-i)}{R}}$. The following bound will be useful to state our main result in the next Theorem,
\begin{equation}
\delta_{h(R)}=\sum_{i=1}^{h(R)}\frac{\sqrt{n}}{\eta(R)}e^{\frac{L_{f}(h(R)-i)}{R}} \leq\frac{R\sqrt{n}}{L_{f}\eta(R)}\left(e^{L_{f}h(R)/R}-1\right).\label{eq:inequality}
\end{equation}
\begin{theorem}\label{thm:main}
	Let $\epsilon(R)=\frac{R\sqrt{n}}{L_{f}\eta(R)}\left(e^{\frac{L_{f}h(R)}{R}}-1\right)$. Algorithm \ref{Alg} terminates in finite time and returns a solution with cost less than or equal to $c^{\epsilon(R)}_R$.
\end{theorem}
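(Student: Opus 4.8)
The plan is to establish the two claims of Theorem~\ref{thm:main} separately: finite termination, and the cost bound. For finite termination, the key observation is structural: the queue $Q$ is always a subset of $\mathcal{U}_R \cup \{Id_\mathcal{U}\}$, and because $\mathcal{U}_R$ is organized as a finite-depth tree (depths are capped by $h(R)$ in line~9, and each depth level has finitely many signals since $\Omega_R$ is finite), the vertex set is finite. Since a signal only enters $Q$ as a child of a signal that was just popped (line~13), and each signal has a unique parent in the tree, no signal can be added to $Q$ more than once. Hence after finitely many iterations $Q$ is empty and the algorithm halts.

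For the cost bound I would argue by contradiction: suppose the returned solution has cost strictly greater than $c_R^{\epsilon(R)}$. Then $c_R^{\epsilon(R)} < \infty$, so $\mathcal{U}_R \cap \mathcal{U}_\mathrm{goal}^{\epsilon(R)}$ is nonempty; pick $u_0$ in it achieving the minimum cost $c_R^{\epsilon(R)}$. Every such $u_0$ is a descendant of $Id_\mathcal{U}$, which starts in the queue. By the priority rule (\ref{eq:queue}), if $u_0$ ever entered the queue it would be popped before anything of larger cost, contradicting the hypothesis; so $u_0$ must be blocked. Inspecting lines~9--10, the only way to block $u_0$ (and the subtree containing it) is for some signal $u_1$ of depth $\geq 1$ to reside in $\Sigma$ with $u_1 \prec_R a_0$ for an ancestor $a_0$ of $u_0$. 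Now Lemma~\ref{lem:pruning} applies: since $\epsilon(R) \geq \delta_{h(R)} \geq \delta_1 \geq \delta_0$ by (\ref{eq:inequality}), and $u_1 \prec_R a_0$ where $a_0$ has a descendant ($u_0$) in $\mathcal{U}_\mathrm{goal}^{\epsilon(R)}$, the lemma yields a descendant $u_1 d_1$ of $u_1$ lying in $\mathcal{U}_\mathrm{goal}^{\epsilon(R) - \delta_1}$ with $J_{x_\mathrm{ic}}(u_1 d_1) \leq c_R^{\epsilon(R) - \delta_1} \leq c_R^{\epsilon(R)}$. But to have been placed in $\Sigma$, $u_1$ itself must have been popped from the queue (line~11 only fires inside the expansion loop), so the same blocking argument applies to $u_1 d_1$: some ancestor of it is blocked by a signal $u_2$ of depth $\geq 2$ in $\Sigma$.

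I would then set up an induction on $k$: having reached a signal $u_k$ of depth $\geq k$ that must be popped, together with a descendant $u_k d_k \in \mathcal{U}_R \cap \mathcal{U}_\mathrm{goal}^{\epsilon(R) - \delta_k}$ with cost $\leq c_R^{\epsilon(R)}$ — using $\delta_k \leq \epsilon(R)$ from (\ref{eq:inequality}) so the superscript stays nonnegative and the set is well-defined — the hypothesis forces an ancestor of $u_k d_k$ to be blocked by some $u_{k+1} \in \Sigma$ of depth $\geq k+1$, and Lemma~\ref{lem:pruning} again produces $u_{k+1} d_{k+1}$ with the analogous properties. The induction runs up to $k = h(R)$, where we are forced to have a signal $u_{h(R)}$ of depth exactly $h(R)$ with $u_{h(R)} \prec_R u_{h(R)-1} d_{h(R)-1}$, whence Lemma~\ref{lem:pruning} gives $u_{h(R)}$ a descendant in $\mathcal{U}_\mathrm{goal}$ — impossible, since a depth-$h(R)$ signal has no descendants in $\mathcal{U}_R$. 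This contradiction completes the proof.

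The main obstacle is the inductive blocking argument in the third paragraph: one must carefully justify that the \emph{only} mechanism preventing a low-cost goal signal (or any of its ancestors) from eventually being popped is the $z \prec_R w$ test in line~9 backed by an entry in $\Sigma$, and that such an entry in turn required its own signal to have been popped — i.e., tracing the causal chain through the interplay of lines~8--13 without circularity, and keeping track that each step strictly increases the depth so the recursion must bottom out at depth $h(R)$. One also has to verify the bookkeeping that $\delta_k$ accumulates exactly the per-step error from chaining Lemma~\ref{lem:pruning}, so that $\epsilon(R) - \delta_k$ never goes negative, which is precisely what (\ref{eq:inequality}) guarantees.
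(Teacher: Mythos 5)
Your proposal is correct and follows essentially the same strategy as the paper's proof: termination from the finiteness of the tree $\mathcal{U}_R$ and single insertion of each signal into $Q$, then a contradiction argument that traces the chain of pruning events down the tree, invoking Lemma~\ref{lem:pruning} at each step and using \eqref{eq:inequality} to keep $\epsilon(R)-\delta_k\geq 0$, until the depth bound $h(R)$ forces a contradiction. The only (immaterial) differences are that the paper closes the induction by observing that the final pruning signal is itself a low-cost member of $\mathcal{U}_\mathrm{goal}$ that must enter the queue and be returned, whereas you close it by noting a depth-$h(R)$ signal has no descendants, and the paper is slightly more careful about whether the label in $\Sigma$ or a sibling that prunes it is the signal that actually enters the queue.
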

\begin{proof}
	The queue is a subset of $\mathcal{U}_R\cup \{Id_\mathcal{U} \}$ and at line 3 in each iteration a lowest cost signal $u$ is removed from the queue. 
	In line 13, only children of the current signal $u$ are added to the queue. Since $\mathcal{U}_R$ is organized as a tree and has no cycles, any signal $u$ will enter the queue at most once.
	%
	Therefore the queue must be empty after a finite number of iterations so the algorithm terminates.

	Next, consider as a point of contradiction the hypothesis that the output has cost greater than $c_R^{\epsilon(R)}$. 
	Then it is necessary that $c_R^{\epsilon(R)}<\infty$ and by the definition of  $c_R^{\epsilon(R)}$  in (\ref{eq:cr_ep}), it is also necessary that $\mathcal{U}_{R}\cap \mathcal{U}_\mathrm{goal}^{\epsilon(R)}$ is non-empty. 
	
	Choose $u^*\in \mathcal{U}_{R}\cap\mathcal{U}_\mathrm{goal}^{\epsilon(R)}$ with cost $J_{x_\mathrm{ic}}(u^*)=c_R^{\epsilon(R)}$. 
	It follows from the hypothesis that $u^*$ does not enter the queue.
	Otherwise, by (\ref{eq:queue}) it would be evaluated before any signal of cost greater than $c_R^{\epsilon(R)}$.
	If $u^*$ does not enter the queue, then a signal $u_0$ must at some iteration be present in $\Sigma$ which prunes an ancestor $a_0$ of $u^*$ ($u_0 \prec_R a_0$ in line 9). 
	This ancestor must satisfy $\texttt{depth}(a_0)>0$ since the ancestor with depth $0$ is $Id_\U$ which enters queue in line 1. 
	By Lemma \ref{lem:pruning}, $u_0$ has a descendant of the form $u_0d_0\in \Ugoal^{\epsilon(R)-\delta_1}$ and $J_{x_\mathrm{ic}}(u_0d_0)\leq c_R^{\epsilon(R)}$. 
	Additionally, $\mathtt{depth}(a_0)>0$ implies $\mathtt{depth}(d_0)\leq h(R)-1$.

	Having pruned $u^*$ in line 9, the signal $u_0\in\Sigma$, or a sibling which prunes $u_0$ (and by transitivity, prunes $u^*$) must at some point be present in the queue (cf. line 12-13). 
	Of these two, denote the one that ends up in the queue by $\tilde{u}_0$.
	Since $\tilde{u}_0$ is at some point present in the queue and $\tilde{u}_0d_0\in\Ugoal^{\epsilon(R)-\delta_1}$, a signal $u_1\in\Sigma$ must prune an ancestor $a_1$ of $\tilde{u}_0d_0$ ($u_1 \prec_R a_1$ in line 9). 
	Since $\tilde{u}_0$ is at some point present in the queue, the ancestor $a_1$ of $\tilde{u}_0d_0$, must have greater depth than $\tilde{u}_0$. 
	By Lemma \ref{lem:pruning}, $u_1$ has a descendant of the form $u_1d_1\in \Ugoal^{\epsilon(R)-\delta_2}$ and $J_{x_\mathrm{ic}}(u_1d_1)\leq c_R^{\epsilon(R)}$. 
	Additionally, $\mathtt{depth}(a_1)> \tilde{u}_0$ implies $\mathtt{depth}(d_1)\leq \mathtt{depth}(d_0)-1 \leq h(R)-2$.

	Continuing this line of deduction leads to the observation that a signal $u_{h(R)-1}$, with a descendant of the form $u_{h(R)-1}d_{h(R)-1}\in \Ugoal^{\epsilon(R)-\delta_{h(R)-1}}$ and $J_{x_\mathrm{ic}}(u_{h(R)-1}d_{h(R)-1})\leq c_R^{\epsilon(R)}$, will be present in the queue; and $\mathtt{depth}(d_{h(R)-1})\leq 1$. 
	Since $u_{h(R)-1}$ is at some point present in the queue, a signal $u_{h(R)}\in\Sigma$ must prune an ancestor $a_{h(R)}$ of $u_{h(R)-1}d_{h(R)-1}$ ($u_{h(R)} \prec_R a_{h(R)}$ in line 9).
	Since $u_{h(R)-1}$ is at some point present in the queue, the ancestor $a_{h(R)}$ of $u_{h(R)-1}d_{h(R)-1}$, must have greater depth than $u_{h(R)-1}$, and therefore, is equal to $u_{h(R)-1}d_{h(R)-1}$. 
	Thus, $u_{h(R)}\in \Ugoal^{\epsilon(R)-\delta_{h(R)}}$ (note that $\epsilon(R)-\delta_{h(R)})\geq0$ by \eqref{eq:inequality}) and $J(u_{h(R)})\leq c_R^{\epsilon(R)}$. 
	Then $u_{h(R)}$ or a sibling which prunes $u_{h(R)}$ will be added to the queue; a contradiction of the hypothesis since this signal will be removed from the queue and the algorithm will terminate, returning this signal in line 7. 
	\qed
\end{proof}

The choice of $\epsilon(R)$ in Theorem \ref{thm:main} converges to zero by  (\ref{eq:partition_scaling}).
Then by Lemma \ref{lem:approx_equal_optimal} we have $c_{R}^{\epsilon(R)}\rightarrow c^{*}$. 
An immediate corollary is that the \GLC method is resolution complete which is the main contribution of this work stated in Theorem \ref{thm:informal}.

\section{Conclusion\label{sec:Conclusions}}
In this paper we described a simple grid-based approximation of the optimal kinodynamic motion planning problem and developed the appropriate generalization of label correcting methods to efficiently search the approximation.
The advantage of the \GLC method is that it does not require a point-to-point local planning subroutine. 
Moreover, numerical experiments demonstrate that the \GLC method is considerably faster the related \SST algorithm, and is more broadly applicable than the \RRTs algorithm. 

%

The focus of the paper, and its main contribution was the theoretical investigation showing that the cost of the feasible solutions returned by the \GLC method converge to the optimal cost for the problem.
From a practical point of view the proposed algorithm is easy to implement and may be used directly for motion planning and trajectory optimization.
Future investigations will include convergence rate analysis, improved partitioning schemes over the hypercube grid we presented, and the construction of admissible heuristics for kinodynamic motion planning to guide the search.
\pagebreak
\section*{Acknowledgements}
This research was funded in part by the Israeli Ministry of Defense. 
We are also grateful to our colleagues Michal \v C\'ap and Dmitry Yershov for their insightful comments. 
\bibliographystyle{splncs}
\bibliography{references}

\end{document}